\theoremstyle{plain}
\newtheorem{definition}{Definition}
\newtheorem{theorem}{Theorem}
\newcommand\ie{i.e.,~}
\newcommand\tb{\textbf}
\newcommand{\ours}[0]{\textsc{MiplMa}}
\newcommand{\oursins}[0]{\textsc{Mipl-MaIns}}
\newcommand{\ourslab}[0]{\textsc{Mipl-MaLab}}
\newcommand{\ourswo}[0]{\textsc{Mipl-WoMa}}
\newcommand{\oursmm}[0]{\textsc{Mipl-MaMm}}
\newcommand{\maam}[0]{\textsc{MaAm}}
\newcommand{\elimipl}[0]{\textsc{EliMipl}}
\newcommand{\demipl}[0]{\textsc{DeMipl}}
\newcommand{\miplgp}[0]{\textsc{MiplGp}}
\newcommand{\proden}[0]{\textsc{Proden}}
\newcommand{\prodenma}[0]{\textsc{Proden-Ma}}
\newcommand{\rc}[0]{\textsc{Rc}}
\newcommand{\lws}[0]{\textsc{Lws}}
\newcommand{\cavl}[0]{\textsc{Cavl}}
\newcommand{\pop}[0]{\textsc{Pop}}
\newcommand{\aggd}[0]{\textsc{Pl-aggd}}
\newcommand{\plsvm}[0]{\textsc{Pl-svm}}
\newcommand{\mmmpl}[0]{\textsc{M3pl}}
\newcommand{\vwsgp}[0]{\textsc{Vwsgp}}
\newcommand{\vgpmil}[0]{\textsc{Vgpmil}}
\newcommand{\lmvgpmil}[0]{\textsc{Lm-Vgpmil}}
\newcommand{\mivae}[0]{\textsc{Mivae}}
\newcommand{\att}[0]{\textsc{Atten}}
\newcommand{\attgate}[0]{\textsc{Atten-Gate}}
\newcommand{\lossatt}[0]{\textsc{Loss-Atten}}
\title{Multi-Instance Partial-Label Learning with \\Margin Adjustment}
\author{
 Wei Tang$^{1,2}$, \space Yin-Fang Yang$^{1,2}$, \space Zhaofei Wang$^{1,2}$, \space Weijia Zhang$^3$,  \space Min-Ling Zhang$^{1,2}$\thanks{Corresponding author }
 \\
  $^1$School of Computer Science and Engineering, Southeast University, Nanjing {\rm 210096}, China\\
  $^2$Key Laboratory of Computer Network and Information Integration (Southeast University), \\Ministry of Education, China\\
  $^3$School of Information and Physical Sciences, The University of Newcastle, \\Callaghan, NSW {\rm 2308}, Australia \\
  \texttt{tangw@seu.edu.cn}, \space \texttt{yangyf22@gmail.com}, \space \texttt{wangzf@seu.edu.cn}, \\ \space \texttt{weijia.zhang@newcastle.edu.au}, \space \texttt{zhangml@seu.edu.cn}\\
}
\begin{document}

\maketitle

\begin{abstract}
Multi-instance partial-label learning (MIPL) is an emerging learning framework where each training sample is represented as a multi-instance bag associated with a candidate label set. Existing MIPL algorithms often overlook the margins for attention scores and predicted probabilities, leading to suboptimal generalization performance. A critical issue with these algorithms is that the highest prediction probability of the classifier may appear on a non-candidate label. In this paper, we propose an algorithm named {\ours}, \ie \textit{Multi-Instance Partial-Label learning with Margin Adjustment}, which adjusts the margins for attention scores and predicted probabilities. We introduce a margin-aware attention mechanism to dynamically adjust the margins for attention scores and propose a margin distribution loss to constrain the margins between the predicted probabilities on candidate and non-candidate label sets. Experimental results demonstrate the superior performance of {\ours} over existing MIPL algorithms, as well as other well-established multi-instance learning algorithms and partial-label learning algorithms.
 
\end{abstract}

\section{Introduction}
\label{sec:intro}
Weakly supervised learning is a powerful strategy for constructing predictive models with limited supervision. Based on the quality and quantity of supervision, \citet{zhou2018brief} systematically categorizes weak supervision into three types: inexact, inaccurate, and incomplete supervision. Inexact supervision indicates a coarse alignment between instances and labels, which is a common and challenging issue in real-world tasks. \emph{Multi-instance learning (MIL)} \cite{amores2013multiple, carbonneau2018multiple, IlseTW18, Weijia22, zhang2022dtfd, wang2018revisiting, zhang2024data} and \emph{partial-label learning (PLL)} \cite{cour2011learning, lyu2020hera, ZhangF0L0QS22, wang2022contrastive, HeFLLY22, GongYB22, LiJLWO23} are two predominant weekly supervised learning frameworks for learning from samples with inexact supervision in the instance space and the label space, respectively. 

\begin{figure}[!htb]
\setlength{\abovecaptionskip}{1.cm} 
\setlength{\belowcaptionskip}{-0.45cm} 
    \centering
    \begin{subfigure}[b]{0.3\textwidth}
        \centering
        \begin{overpic}[width=4cm, height=5cm]{./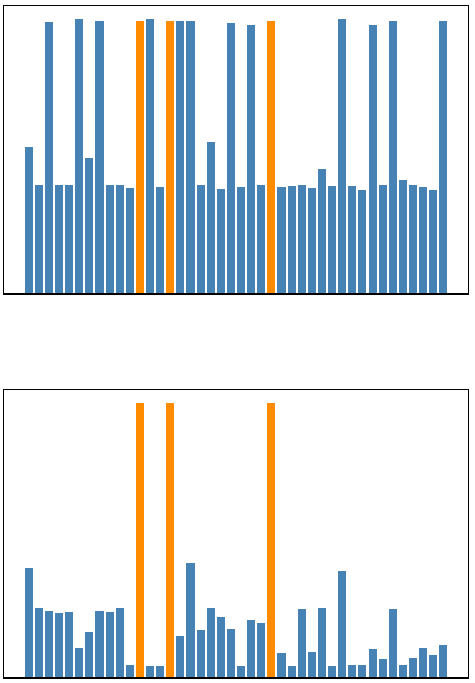}
        		\put(-70, 30) {\small \rotatebox{90}{attention score}} 
		\put(-70, 590) {\small \rotatebox{90}{attention score}} 
		\put(340, 520){\small  index}
		\put(245, 455){\small  \tb{(a)} {\elimipl}}
		\put(340, -45){\small  index}
		\put(245, -105){\small  \tb{(b)} {\ours}}
        \end{overpic}
    \end{subfigure}
    \hspace{1.0cm} 
    \begin{subfigure}[b]{0.58\textwidth}
        \centering
        \begin{overpic}[width=8cm, height=5cm]{./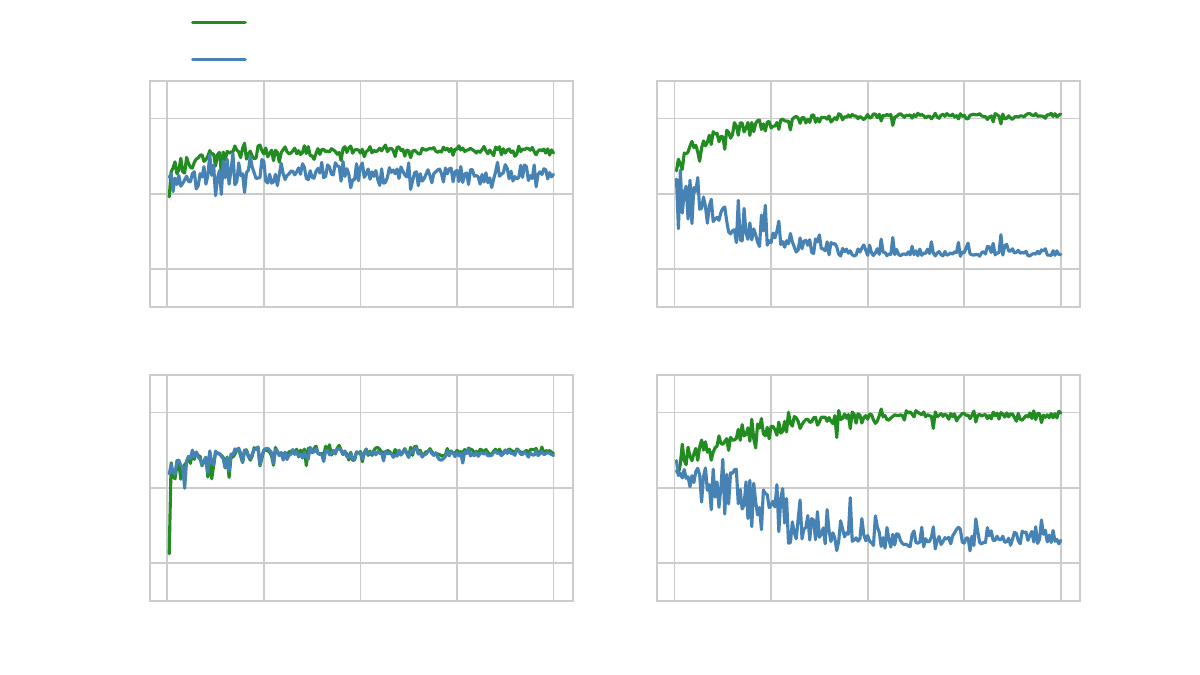}
		\put(-45, 390) {\small \rotatebox{90}{probability}} 
		\put(495, 390) {\small \rotatebox{90}{probability}} 
		\put(185, 323){\small  epoch}
		\put(725, 323){\small  epoch}
		\put(135, 283){\small  \tb{(c)} {\elimipl}}
		\put(665, 283){\small  \tb{(e)} {\ours}}	
		\put(-45, 50) {\small \rotatebox{90}{probability}} 
		\put(495, 50) {\small \rotatebox{90}{probability}} 
		\put(185, -23){\small  epoch}
		\put(725, -23){\small  epoch}
		\put(135, -63){\small  \tb{(d)} {\elimipl}}
		\put(665, -63){\small  \tb{(f)} {\ours}}	
        \end{overpic}
    \end{subfigure}
    \caption{Margin violations in the instance space and the label space. (a) and (b) depict the attention scores of {\elimipl} and {\ours} for the same test bag in the FMNIST-{\scriptsize{MIPL}} dataset. Orange and blue colors indicate attention scores assigned to positive and negative instances, respectively. (c)--(f) show the highest predicted probabilities for candidate labels (green) and non-candidate labels (blue) by {\elimipl} or {\ours} in the CRC-{\scriptsize{MIPL-Row}} dataset. (c) and (e) correspond to the same training bag, while (d) and (f) refer to another training bag.}
 \label{fig:ill}
\end{figure}

Recently, \emph{multi-instance partial-label learning (MIPL)} \citep{tang2023miplgp} has been introduced to handle \emph{dual inexact supervision}, where inexact supervision exists in both the instance space and label space. Therefore, MIPL can be seen as a generalized framework of MIL and PLL. In MIPL, a training sample is represented as a multi-instance bag associated with a candidate label set. The candidate label set comprises one true label and the remaining are false positives. The multi-instance bag contains at least one instance corresponding to the true label and does not contain any instance associated with the false positives. Additionally, \emph{positive instances} refer to the instances that belong to the true label, while \emph{negative instances} represent the remaining instances in the bag that are not associated with any label in the label space. During training, the identities of the positive instances and the true label are inaccessible.

Dual inexact supervision widely exists in many tasks. In the classification of histopathological images, an image is frequently partitioned into a multi-instance bag due to its high resolution \cite{campanella2019, lu2021data, ShaoBCWZJZ21, zhang2022dtfd} and employing domain experts for providing ground truth labels are costly. As a result, the utilization of crowd-sourced candidate label sets proves to be a valuable strategy in substantially mitigating labeling expenses \cite{liao2022learning}. To address colorectal cancer classification under dual inexact supervision, \citet{tang2023demipl} have introduced the MIPL algorithm named {\demipl}. This approach employs an attention mechanism to aggregate all instances within a bag into a bag-level feature representation and a disambiguation strategy to identify the true label. 
Following {\demipl}, {\elimipl} algorithm has been proposed to exploit the label information from both candidate and non-candidate label sets \cite{tang2024elimipl}. Additionally, the early MIPL algorithm {\miplgp} predicts a bag-level label by aggregating all instance-level labels within the bag without utilizing attention mechanisms \cite{tang2023miplgp}.

However, existing MIPL algorithms fail to consider the dynamics of the margin between attention scores of positive and negative instances, as well as the margin between the candidate and the non-candidate label sets. These oversights could lead to two major issues. First, the attention scores for positive and negative instances can be quite similar, and in some cases, negative instances may even receive higher attention scores than positive ones, as illustrated in Figure \ref{fig:ill}(a). Second, the classifier may even assign higher predicted probabilities to non-candidate labels than to candidate labels. 
Figure \ref{fig:ill}(c) illustrates a scenario where {\elimipl} assigns predicted probabilities to the candidate labels that are only marginally higher than the non-candidate ones. Furthermore, {\elimipl} may even output lower predicted probabilities for candidate labels than non-candidate ones, as depicted in Figure \ref{fig:ill}(d). Such erroneous predictions may have serious consequences in applications. For example, in medical image classification, misclassifying images of severe conditions as mild or disease-free may cause patients to miss the opportunity for timely treatment. In this paper, we term this phenomenon as \emph{margin violations}, where the attention scores of negative instances surpass those of positives, or the predicted probabilities for non-candidate labels exceed those for candidate ones. Margin violations occur in both the instance and label spaces, adversely affecting the model's generalization.

To overcome margin violations, we propose a novel end-to-end MIPL algorithm named {\ours}, \ie \textit{Multi-Instance Partial-Label learning with Margin Adjustment}. Specifically, to mitigate margin violations in the instance space, we introduce a margin-aware attention mechanism to consolidate each multi-instance bag into a unified feature representation, incorporating dynamic margin adjustments for attention scores. To address margin violations in the label space, we propose a margin distribution loss that adjusts the margin distribution between the model's highest predicted probability for candidate labels and its highest predicted probability for non-candidate labels. In Figure \ref{fig:ill}(a), {\ours} allocates higher attention scores to positive instances and enlarges the gap between the attention scores of positive and negative instances. As illustrated in Figure \ref{fig:ill}(e) and (f), {\ours} significantly enhances the classifier's highest predicted probability on candidate labels while concurrently reducing the model's highest predicted probability on non-candidate labels. 
Consequently, our margin adjustment strategy effectively reduces supervision inexactness in both the instance space and the label space.

Our contributions can be summarized as follows: First, we identify the phenomenon of margin violations and adjust the margins in both the instance and label spaces to alleviate this issue. Second, our proposed {\ours} outperforms state-of-the-art methods significantly. Third, the introduced margin-aware attention mechanism enhances the performance of MIL algorithms, while the margin distribution loss improves the generalization ability of PLL algorithms.

\begin{figure*}[!t]
\centering
\begin{overpic}[width=140mm]{./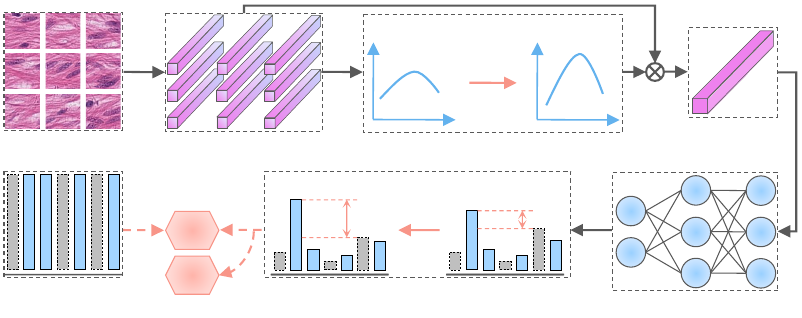}   
\put(70, 210) {\small $\boldsymbol{X}_i$}
\put(168, 320) {\small $\psi$}
\put(190, 210) {\small instance-level features $\boldsymbol{H}_i$}
\put(458, 210) {\small margin-aware attention mechanism}
\put(585, 310) {\small \textcolor[RGB]{248,149,136}{margin}}
\put(565, 265) {\small \textcolor[RGB]{248,149,136}{adjustment}}
\put(480, 355) {\small attention}
\put(478, 332) {\small scores $\boldsymbol{A}_{i}$}
\put(680, 355) {\small attention}
\put(680, 332) {\small scores $\boldsymbol{A}_{i}^\prime$}
\put(875, 231) {\small bag-level}
\put(873, 210) {\small feature $\boldsymbol{z}_i$}
\put(70, 5) {\small $\mathcal{S}_i$}
\put(10, 30) {\small $1$}
\put(32, 30) {\small $2$}
\put(52, 30) {\small $3$}
\put(74, 30) {\small $4$}
\put(96, 30) {\small $5$}
\put(117, 30) {\small $6$}
\put(138, 30) {\small $7$}
\put(226, 102) {$\mathcal{L}_{d}$}
\put(223, 45) {$\mathcal{L}_{m}$}
\put(345, 30) {\small $1$}
\put(365, 30) {\small $2$}
\put(387, 30) {\small $3$}
\put(407, 30) {\small $4$}
\put(428, 30) {\small $5$}
\put(448, 30) {\small $6$}
\put(469, 30) {\small $7$}
\put(563, 30) {\small $1$}
\put(583, 30) {\small $2$}
\put(605, 30) {\small $3$}
\put(625, 30) {\small $4$}
\put(646, 30) {\small $5$}
\put(667, 30) {\small $6$}
\put(688, 30) {\small $7$}
\put(435, 160) {\small \textcolor[RGB]{248,149,136}{margin adjustment}}
\put(450, 5) {probabilities $\hat{\boldsymbol{p}}_i$}
\put(830, 5) {classifier}
\end{overpic}
\caption{The {\ours} framework processes an input comprising the multi-instance bag $\boldsymbol{X}_i = \{\boldsymbol{x}_{i,1}, \boldsymbol{x}_{i,2}, \cdots, \boldsymbol{x}_{i,9}\}$ and the candidate label set $\mathcal{S}_i = \{2,3,5,7\}$, where $\mathcal{L}_d$ and $\mathcal{L}_m$ represent the dynamic disambiguation loss and the margin distribution loss, respectively.} 
\label{fig:framework}
\end{figure*}

\section{The Proposed Approach}
\label{sec:methodology}
Formally, we define a MIPL training dataset as $\mathcal{D} = \{(\boldsymbol{X}_i, \mathcal{S}_i) \mid 1 \le i \le m\}$, comprising $m$ multi-instance bags and their corresponding candidate label sets. Specifically, each candidate label set $\mathcal{S}_i$ includes one true label, and the remaining are false positives. We denote the instance space as $\mathcal{X} = \mathbb{R}^d$, and the label space as $\mathcal{Y} = \{1, 2, \cdots, k\}$, covering $k$ class labels. The $i$-th bag $\boldsymbol{X}_i = \{\boldsymbol{x}_{i,1}, \boldsymbol{x}_{i,2}, \cdots, \boldsymbol{x}_{i,n_i}\}$ consists of $n_i$ instances in the $d$-dimensional space. Both the candidate label set $\mathcal{S}_i$ and the non-candidate label set $\bar{\mathcal{S}}_i$ are proper subsets of the label space $\mathcal{Y}$ and adhere to the conditions $|\mathcal{S}_i| + |\bar{\mathcal{S}}_i| = |\mathcal{Y}| = k$, where $|\cdot|$ represents the cardinality of a set.

The overall framework of {\ours} is depicted in Figure \ref{fig:framework}. Initially, we employ a feature extractor $\psi$ to learn instance-level feature representations $\boldsymbol{H}_i$ within the multi-instance bag $\boldsymbol{X}_i$. Subsequently, we propose a margin-aware attention mechanism with adjustable margins of attention scores to fuse $\boldsymbol{H}_i$ into a unified feature representation $\boldsymbol{z}_i$. Lastly, we utilize a classifier to predict the probabilities $\hat{\boldsymbol{p}}_i$ of the multi-instance bag. To identify the true label from the candidate label set, we introduce the dynamic disambiguation loss $\mathcal{L}_d$ and the margin distribution loss $\mathcal{L}_m$.

\subsection{Margin Adjustment in the Instance Space}
For a given multi-instance bag $\boldsymbol{X}_i = \{\boldsymbol{x}_{i,1}, \boldsymbol{x}_{i,2}, \cdots, \boldsymbol{x}_{i,n_i}\} \in \mathbb{R}^{d \times n_i}$ comprising $n_i$ instances, we utilize  a feature extractor $\psi$ to learn instance-level feature representations, which is defined as follows:
\begin{equation}
\label{eq:extractor}
	\boldsymbol{H}_i = \psi(\boldsymbol{X}_i) = \{\boldsymbol{h}_{i, 1},  \boldsymbol{h}_{i, 2},  \cdots,  \boldsymbol{h}_{i, n_i}\}.
\end{equation}
Here, $\boldsymbol{H}_i \in \mathbb{R}^{l \times n_i}$ represents the instance-level feature representation of the multi-instance bag $\boldsymbol{X}_i$, and $\boldsymbol{h}_{i, j}$ denotes the feature representation of the $j$-th instance in the multi-instance bag $\boldsymbol{X}_i$.

The subsequent step involves computing attention scores for each instance. In MIPL, attention scores of all instances are closely distributed during the early stages of training. However, as training progresses, attention scores for positive instances gradually become higher than those for negative instances \cite{tang2023demipl}. Due to dual inexact supervision, the attention mechanism struggles to differentiate between positive and negative instances during the initial training phases and calculate their corresponding attention scores. As training continues, the attention mechanism gradually assigns more distinct attention scores to positive and negative instances.

Motivated by this observation, we introduce a margin-aware attention mechanism that dynamically adjusts the margin of attention scores to achieve a closer alignment with the model's training process. The computation of attention scores is given by:
\begin{equation}
\label{eq:attention}
    	\boldsymbol{A}_{i} = \text{softmax}\left(\boldsymbol{W}_{}^\top \left(\text{tanh}\left(\boldsymbol{W}_{1}^\top \boldsymbol{H}_{i} \right) \odot \text{sigm}\left(\boldsymbol{W}_{2}^\top \boldsymbol{H}_{i}\right)\right) / \tau^{(t)} \right),
\end{equation}
where $\boldsymbol{W}^\top$, $\boldsymbol{W}_{1}^\top$, and $\boldsymbol{W}_{2}^\top$ are learnable parameters. 
$\text{tanh}(\cdot)$ and $\text{sigm}(\cdot)$ are the hyperbolic tangent and sigmoid functions, respectively. The operator $\odot$ denotes element-wise multiplication, and $\tau^{(t)}$ denotes the \emph{temperature parameter} of the margin-aware attention mechanism.
Specifically, in the early training stages, a larger temperature parameter is employed to smooth the distribution of attention scores, preventing the attention mechanism from assigning high scores to instances that are not unequivocally identified as positive or negative. In the later training stages, a smaller temperature parameter is used to sharpen the distribution of attention scores, thereby widening the gap between attention scores for positive and negative instances. Consequently, throughout the training process, the temperature parameter at the $t$-th epoch is dynamically represented as follows:
\begin{equation}
\label{eq:tau}
\tau^{(t)} = \max \{ \tau_m, \tau^{(t-1)} * 0.95\},
\end{equation}
where $\tau_m$ and $ \tau^{(t-1)}$ represent the minimum temperature and the temperature at the $(t-1)$-th epoch, respectively. Therefore, Eq. (\ref{eq:tau}) describes an annealing process for the temperature parameter $\tau^{(t)}$.

For multi-instance bags with varying numbers of positive instances, the distribution of attention scores exhibits variations. Consequently, different multi-instance bags require varying temperature parameters. To address this issue, we introduce the following \emph{normalization operations} for the attention scores:
\begin{equation}
\label{eq:normalization}
    	\boldsymbol{A}_{i}^{\prime} = \frac{\boldsymbol{A}_{i} - \boldsymbol{\bar{A}}_{i}} {\sqrt{\sum_{j=1}^{n_i} (a_{i,j} - \bar{a}_i)^2 / (n_i - 1) } },
\end{equation}
where $\bar{a}_i = \frac{1}{n_i} \sum_{j=1}^{n_i} a_{i,j}$ is the mean value of the attention score $\boldsymbol{A}_{i}$ and $\boldsymbol{\bar{A}}_{i} = [\bar{a}_i, \bar{a}_i, \cdots, \bar{a}_i] \in \mathbb{R}^{1 \times n_i}$.  
Subsequent to obtaining normalized attention scores, we aggregate the instance-level feature representations to compose the bag-level feature representation $\boldsymbol{z}_i \in \mathbb{R}^{l}$ in the following manner:
\begin{equation}
\label{eq:aggregation}
	\boldsymbol{z}_i = \boldsymbol{H}_i {\boldsymbol{A}_{i}^{\prime}}^\top.
\end{equation}

We now discuss the theoretical properties of the proposed margin-aware attention mechanism. Based on the definitions of the permutation and permutation invariance (Appendix \ref{sec:proof_the1}), the margin-aware attention mechanism can be seen as the operator $\mathcal{A}$. Then, we have the following theorem:
\begin{theorem}
\label{the:invariance}
The margin-aware attention mechanism is permutation invariant.
\end{theorem}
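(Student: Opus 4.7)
The plan is to show that each stage of the pipeline Eq.~(\ref{eq:extractor})--Eq.~(\ref{eq:aggregation}) is either \emph{equivariant} (the output's instance ordering is permuted in the same way as the input's) or \emph{invariant} (the output does not depend on the ordering at all), and then compose these observations. Concretely, I would fix a permutation $\pi$ acting on the instance index set $\{1,\ldots,n_i\}$ and denote by $\boldsymbol{P}_\pi$ the associated $n_i \times n_i$ permutation matrix, so that permuting the bag amounts to the right-multiplication $\boldsymbol{X}_i \mapsto \boldsymbol{X}_i \boldsymbol{P}_\pi$.

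First I would argue that the feature extractor $\psi$ in Eq.~(\ref{eq:extractor}) acts column-wise, hence $\psi(\boldsymbol{X}_i \boldsymbol{P}_\pi) = \boldsymbol{H}_i \boldsymbol{P}_\pi$; this is the base equivariance. Next I would verify that the pre-softmax term inside Eq.~(\ref{eq:attention}), namely
\begin{equation*}
\boldsymbol{W}^\top\bigl(\text{tanh}(\boldsymbol{W}_1^\top \boldsymbol{H}_i) \odot \text{sigm}(\boldsymbol{W}_2^\top \boldsymbol{H}_i)\bigr)/\tau^{(t)},
\end{equation*}
is built from (i) left multiplications by weight matrices that do not mix columns and (ii) entry-wise nonlinearities, so it is equivariant: replacing $\boldsymbol{H}_i$ by $\boldsymbol{H}_i \boldsymbol{P}_\pi$ right-multiplies the result by $\boldsymbol{P}_\pi$. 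Because the softmax in Eq.~(\ref{eq:attention}) normalizes by a sum over all columns and then applies an entry-wise exponential, it is also equivariant, giving $\boldsymbol{A}_i(\boldsymbol{H}_i \boldsymbol{P}_\pi) = \boldsymbol{A}_i(\boldsymbol{H}_i)\boldsymbol{P}_\pi$.

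Then I would handle the normalization in Eq.~(\ref{eq:normalization}): the empirical mean $\bar{a}_i$ and the sample standard deviation in the denominator are both symmetric functions of the entries of $\boldsymbol{A}_i$, hence invariant under $\boldsymbol{P}_\pi$; therefore the vector $\boldsymbol{\bar A}_i$ is unchanged and $\boldsymbol{A}_i' \mapsto \boldsymbol{A}_i' \boldsymbol{P}_\pi$. Finally, for the aggregation in Eq.~(\ref{eq:aggregation}),
\begin{equation*}
\boldsymbol{z}_i \;=\; (\boldsymbol{H}_i \boldsymbol{P}_\pi)(\boldsymbol{A}_i' \boldsymbol{P}_\pi)^\top \;=\; \boldsymbol{H}_i \boldsymbol{P}_\pi \boldsymbol{P}_\pi^\top {\boldsymbol{A}_i'}^\top \;=\; \boldsymbol{H}_i {\boldsymbol{A}_i'}^\top,
\end{equation*}
since $\boldsymbol{P}_\pi \boldsymbol{P}_\pi^\top = \boldsymbol{I}$. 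This shows the whole operator $\mathcal{A}$ is permutation invariant.

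The main obstacle, such as it is, is bookkeeping rather than any deep argument: one must be careful that (a) the softmax is taken across the instance dimension (so that it is equivariant, not merely invariant); (b) the normalization in Eq.~(\ref{eq:normalization}) uses only symmetric statistics of the scores and therefore does not break equivariance; and (c) the temperature annealing Eq.~(\ref{eq:tau}) depends only on the epoch index $t$ and not on instance order, so it plays no role in the permutation argument. Once these points are checked, the conclusion follows from the elementary identity $\boldsymbol{P}_\pi \boldsymbol{P}_\pi^\top = \boldsymbol{I}$.
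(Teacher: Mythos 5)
Your proposal is correct and follows essentially the same route as the paper's proof in Appendix~\ref{sec:proof_the1}: establish equivariance of the pre-softmax scores, the softmax, and the normalization (using that the mean and standard deviation are symmetric in the instances), then cancel the permutation matrix in the aggregation step via $\boldsymbol{Q}\boldsymbol{Q}^\top = \boldsymbol{I}$. The only cosmetic difference is that you additionally make explicit the column-wise equivariance of the feature extractor $\psi$, which the paper leaves implicit by starting from $\boldsymbol{H}$.
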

Theorem \ref{the:invariance} demonstrates that the margin-aware attention mechanism remains unaffected by the order of instances within multi-instance bags. This property is crucial for algorithms that handle set inputs \cite{ ZaheerKRPSS17,IlseTW18}. The proof is provided in Appendix \ref{sec:proof_the1}.

\subsection{Margin Adjustment in the Label Space}
With the aggregated bag-level feature representation, we utilize a classifier that synergizes dynamic disambiguation loss and margin distribution loss to identify the true label. 

The aim of our \emph{dynamic disambiguation loss} is to progressively identify the true labels by calculating the classification loss, as illustrated below:
\begin{equation}
\label{eq:ma}
	\mathcal{L}_{\text{d}} = -\frac{1}{m} \sum_{i=1}^{m} \sum_{c \in \mathcal{S}_i} p_{i, c}^{(t)} \log (\hat{p}_{i, c}^{(t)}),
\end{equation}
where $p_{i, c}^{(t)}$ and $\hat{p}_{i, c}^{(t)}$ represent the weight and predicted probability, respectively, on the $c$-th class at the $t$-th iteration. This weight represents the probability that the corresponding candidate label is the true label, which is initialized as follows:
\begin{equation}
\label{eq:init_p}
	p_{i,c}^{(0)} = \left\{\begin{array}{cc}
	\frac{1}{\left|\mathcal{S}_i\right|}  & \text {if } c \in \mathcal{S}_i,  \\
	0 & \text {otherwise,}
\end{array}\right.
\end{equation}
where $\left| \cdot \right|$ represents the set cardinality. In the $t$-th epoch, we update the weight as:
\begin{equation}
\label{eq:update_p}
	p_{i,c}^{(t)} = \left\{\begin{array}{cc}
	\alpha^{(t)} p_{i,c}^{(t-1)}  + (1-\alpha^{(t)}) \frac{\hat{p}_{i, c}^{(t)}}{\sum_{c^\prime \in \mathcal{S}_i} \hat{p}_{i, c^\prime}^{(t)}}  & \text {if } c \in \mathcal{S}_i,  \\
	0 & \text {otherwise,}
\end{array}\right.
\end{equation}
where $\alpha^{(t)} = {(T-t)}/{T}$ is a tuning parameter used to balance the update speed of the weight, and $T$ is the maximum number of training epochs.

The dynamic disambiguation loss adjusts the classifier's predicted probabilities for the candidate labels, without affecting the probabilities assigned to non-candidate labels. As illustrated in Figure \ref{fig:ill}(d), this circumstance may result in the classifier assigning its highest predicted probability to a non-candidate label instead of a candidate label, \ie margin violations. To mitigate potential issues in model generalization, it is crucial to maintain a significant margin between the highest predicted probabilities for the candidate and non-candidate labels.
Therefore, we propose the \emph{margin loss} to maximize the margin between the highest predicted probability on the candidate label set and on the non-candidate label set, as shown below:
\begin{equation}
\label{eq:margin}
	\mathcal{L}_{\text{ml}} = \frac{1}{m} \sum_{i=1}^{m} \{1 - (\max_{c \in \mathcal{S}_i} \hat{p}_{i, c}^{(t)} - \max_{\bar{c} \in \mathcal{\bar{S}}_i}  \hat{p}_{i, \bar{c}}^{(t)}) \},	
\end{equation} 
where $\max_{c \in \mathcal{S}_i} \hat{p}_{i, c}^{(t)}$ and $\max_{\bar{c} \in \mathcal{\bar{S}}_i}  \hat{p}_{i, \bar{c}}^{(t)}$ are the highest predicted probabilities on the candidate label set and the non-candidate label set, respectively. However, only considering the mean margin cannot effectively address margin violations, thus affecting the performance. Some recent studies have shown that the model performance can be enhanced by maximizing the margin mean and minimizing the margin variance simultaneously \cite{GaoZ13a,ZhangZ18,JiangKMB19}. Therefore, we employ two statistics of the margins \ie the margin mean and the margin variance, to adjust the margin distribution. Specifically, we can maximize the margin mean and minimize the margin variance between the highest predicted probability on the candidate label set and on the non-candidate label set simultaneously by minimizing the following \emph{margin distribution loss}:
\begin{equation}
\label{eq:mc_loss}
	\mathcal{L}_{\text{m}} = \frac{\mathcal{M} \{\phi_1, \phi_2, \cdots, \phi_m\} }{1 - \sqrt{\mathcal{V} \{ \phi_1, \phi_2, \cdots, \phi_m \}}}  ,	
\end{equation}
where $\phi_i = \{1 - (\max_{c \in \mathcal{S}_i} \hat{p}_{i, c}^{(t)} - \max_{\bar{c} \in \mathcal{\bar{S}}_i}  \hat{p}_{i, \bar{c}}^{(t)}) \}$ refers to the margin loss of the $i$-th multi-instance bag. $\mathcal{M} \{\cdot\} $ and $\mathcal{V} \{\cdot\}$ are the mean and the variance of the margin loss, respectively.

During training, the full loss is represented as the weighted sum of the dynamic disambiguation loss and the margin distribution loss, as expressed below:
\begin{equation}
\label{eq:full_loss}
	\mathcal{L} = \mathcal{L}_{\text{d}} + \lambda \mathcal{L}_{\text{m}},
\end{equation}
where $\lambda$ represents a hyperparameter.

\section{Experiments}
\label{sec:experiments}
\subsection{Experimental Configurations}
\label{subsec:configurations}
\subsubsection{Datasets}
\begin{table*}[!t] 
\setlength{\abovecaptionskip}{0.cm} 
\caption{Characteristics of the benchmark and real-world MIPL datasets.}
\label{tab:datasets}
\centering 
\footnotesize
\begin{tabular}{lcccccccc}
\hline \hline
	\multicolumn{1}{l}{Dataset}				& \multicolumn{1}{c}{\#bag}		& \multicolumn{1}{c}{\#ins} 		& \multicolumn{1}{c}{max. \#ins}		& \multicolumn{1}{c}{min. \#ins}			& \multicolumn{1}{c}{avg. \#ins}	 	& \multicolumn{1}{c}{\#dim}		& \multicolumn{1}{c}{\#class} 		& \multicolumn{1}{c}{avg. \#CLs }	\\
	\hline
MNIST-{\scriptsize{MIPL}} 			& \multicolumn{1}{c}{500}			& \multicolumn{1}{c}{20664}		& \multicolumn{1}{c}{48}				& \multicolumn{1}{c}{35}				& \multicolumn{1}{c}{41.33}		& \multicolumn{1}{c}{784}			& \multicolumn{1}{c}{5}			& \multicolumn{1}{c}{{2, 3, 4}}		\\ 
FMNIST-{\scriptsize{MIPL}} 			& \multicolumn{1}{c}{500}			& \multicolumn{1}{c}{20810}		& \multicolumn{1}{c}{48}				& \multicolumn{1}{c}{36}				& \multicolumn{1}{c}{41.62} 		&  \multicolumn{1}{c}{784}			& \multicolumn{1}{c}{5}			& \multicolumn{1}{c}{{2, 3, 4}}		\\ 
Birdsong-{\scriptsize{MIPL}} 			& \multicolumn{1}{c}{1300}		& \multicolumn{1}{c}{48425}		& \multicolumn{1}{c}{76}				& \multicolumn{1}{c}{25}				& \multicolumn{1}{c}{37.25}		& \multicolumn{1}{c}{38}			& \multicolumn{1}{c}{13}			& \multicolumn{1}{c}{{2, 3, 4}}		\\ 
SIVAL-{\scriptsize{MIPL}} 				& \multicolumn{1}{c}{1500}		& \multicolumn{1}{c}{47414}		& \multicolumn{1}{c}{32}				& \multicolumn{1}{c}{31}				& \multicolumn{1}{c}{31.61} 		& \multicolumn{1}{c}{30}			& \multicolumn{1}{c}{25}			& \multicolumn{1}{c}{{2, 3, 4}}		\\
	  \hline 
C-Row				& \multicolumn{1}{c}{7000}		& \multicolumn{1}{c}{56000}		& \multicolumn{1}{c}{8}				& \multicolumn{1}{c}{8}				& \multicolumn{1}{c}{8}			& \multicolumn{1}{c}{9}			& \multicolumn{1}{c}{7}			& \multicolumn{1}{c}{2.08}		\\
C-SBN				& \multicolumn{1}{c}{7000}		& \multicolumn{1}{c}{63000}		& \multicolumn{1}{c}{9}				& \multicolumn{1}{c}{9}				& \multicolumn{1}{c}{9}			& \multicolumn{1}{c}{15}			& \multicolumn{1}{c}{7}			& \multicolumn{1}{c}{2.08}		\\
C-KMeans	& \multicolumn{1}{c}{7000}		& \multicolumn{1}{c}{30178}		& \multicolumn{1}{c}{6}				& \multicolumn{1}{c}{3}				& \multicolumn{1}{c}{4.311}		& \multicolumn{1}{c}{6}			& \multicolumn{1}{c}{7}			& \multicolumn{1}{c}{2.08}			\\
C-SIFT				& \multicolumn{1}{c}{7000}		& \multicolumn{1}{c}{175000}		& \multicolumn{1}{c}{25}				& \multicolumn{1}{c}{25}				& \multicolumn{1}{c}{25}			&\multicolumn{1}{c}{128}			& \multicolumn{1}{c}{7}			& \multicolumn{1}{c}{2.08}		\\
C-R34-16	& \multicolumn{1}{c}{7000}		& \multicolumn{1}{c}{112000}		& \multicolumn{1}{c}{16}				& \multicolumn{1}{c}{16}				& \multicolumn{1}{c}{16}			&\multicolumn{1}{c}{1000}			& \multicolumn{1}{c}{7}			& \multicolumn{1}{c}{2.08}		\\
C-R34-25 	& \multicolumn{1}{c}{7000}		& \multicolumn{1}{c}{175000}		& \multicolumn{1}{c}{25}				& \multicolumn{1}{c}{25}				& \multicolumn{1}{c}{25}			&\multicolumn{1}{c}{1000}			& \multicolumn{1}{c}{7}			& \multicolumn{1}{c}{2.08}		\\
	  \hline \hline
  \end{tabular}
 \vspace{0.15cm}
\end{table*}

Following the experimental setup of {\demipl} \cite{tang2023demipl}, we utilize four MIPL benchmark datasets and one real-world dataset. The four benchmark datasets encompass MNIST-{\scriptsize{MIPL}}, FMNIST-{\scriptsize{MIPL}}, Birdsong-{\scriptsize{MIPL}}, and SIVAL-{\scriptsize{MIPL}}, spanning diverse application domains such as image analysis and biology \cite{lecun1998gradient, han1708, briggs2012rank, SettlesCR07}. Additionally, the real-world CRC-{\scriptsize{MIPL}} dataset is annotated by crowdsourced workers for colorectal cancer classification. The previous works \cite{tang2023demipl,tang2024elimipl} employ four distinct types of multi-instance features and consists of four sub-datasets: CRC-{\scriptsize{MIPL-Row}} (C-Row), CRC-{\scriptsize{MIPL-SBN}} (C-SBN), CRC-{\scriptsize{MIPL-KMeansSeg}} (C-KMeans), and CRC-{\scriptsize{MIPL-SIFT}} (C-SIFT). These multi-instance features are generated via four image bag generators \cite{WeiZ16}, \ie Row, single blob with neighbors (SBN), k-means segmentation (KMeansSeg), and scale-invariant feature transform (SIFT), respectively. Besides these multi-instance features, we are the first to employ the ResNet \cite{HeZRS16} to learn the multi-instance features of CRC-{\scriptsize{MIPL}} dataset. Specifically, we partition each image into $N$ non-overlapping patches, treating each patch as an instance. Subsequently, the ResNet-34 is employed to acquire feature representations for each patch, resulting in feature representations of dimension $1000$ for each patch. In our experiments, the $N$ is $16$ and $25$, and the resulting datasets are CRC-{\scriptsize{MIPL-ResNet-34-16}} (C-R34-16) and CRC-{\scriptsize{MIPL-ResNet-34-25}} (C-R34-25).

The characteristics of the dataset are detailed in Table \ref{tab:datasets}. It provides the number of multi-instance bags and total instances, denoted as \emph{\#bag} and \emph{\#ins}, respectively. Furthermore, we employ \emph{max. \#ins}, \emph{min. \#ins}, and \emph{avg. \#ins} to express the maximum, minimum, and average instance count within all bags. The dimensionality of each instance-level feature representation is indicated by \emph{\#dim}. \emph{\#class} and \emph{avg. \#CLs} denote the length of the label space and the average length of candidate label sets, respectively. For a comprehensive performance assessment, we vary the number of false positive labels on the benchmark datasets, represented as $r$ ($|\mathcal{S}_i| = r + 1$).

\subsubsection{Comparative Algorithms}
We conduct a comprehensive comparison of {\ours} with a wide variety of baselines, covering MIPL, PLL, and MIL algorithms. 
For MIPL algorithms, we compare with {\miplgp} \cite{tang2023miplgp}, {\demipl} \cite{tang2023demipl}, and {\elimipl} \cite{tang2024elimipl}. In our evaluation, we incorporate seven PLL algorithms, featuring five deep-learning-based approaches: {\proden} \cite{LvXF0GS20}, {\rc} \cite{FengL0X0G0S20}, {\lws} \cite{WenCHL0L21}, {\cavl} \cite{ZhangF0L0QS22}, and {\pop} \cite{XuLLQG23}, one feature-aware disambiguation algorithm, {\aggd} \cite{wangdb2022}, and two margin-based algorithms, {\mmmpl} \cite{YuZ17} and {\plsvm} \cite{NguyenC08}. Furthermore, our comparison encompasses seven MIL algorithms. Three of the MIL algorithms are Gaussian processes-based: {\vwsgp} \cite{KandemirHDRLH16}, {\vgpmil} \cite{HauBmannHK17}, and {\lmvgpmil} \cite{HauBmannHK17}. Additionally, a variational autoencoder-based algorithm, {\mivae} \cite{Weijia21}, and three attention-based algorithms: {\att} \cite{IlseTW18}, {\attgate} \cite{IlseTW18}, and {\lossatt} \cite{Shi20}, are included. 

The deep-learning-based PLL algorithms \cite{LvXF0GS20, FengL0X0G0S20, WenCHL0L21, ZhangF0L0QS22} can be equipped with either the linear model or multi-layer perceptrons (MLP) as backbone networks. Results obtained from the linear model are presented in the main body of the paper, while additional experiment results are detailed in the Appendix \ref{sec:add_res}. Parameters for all compared baselines have been meticulously tuned, drawing from recommendations in the original literature or refined through our pursuit of improved performance.

\subsubsection{Implementation}
We implement {\ours} using PyTorch \cite{Paszke19} and conduct training with a single NVIDIA Tesla V100 GPU. Employing the stochastic gradient descent (SGD) optimizer, we set the momentum value to $0.9$ with a weight decay of $0.0001$. 
To learn the instance-level features, we employ a two-layer convolutional neural network and a fully connected network for the MNIST-{\scriptsize{MIPL}} and FMNIST-{\scriptsize{MIPL}} datasets. Since the features of the Birdsong-{\scriptsize{MIPL}}, and SIVAL-{\scriptsize{MIPL}} datasets are preprocessed, we only employ a fully connected network to learn the feature representations. For the CRC-{\scriptsize{MIPL}} dataset, the feature extractor is one of the four image bag generators or ResNet-34, followed by a fully connected network.
The initial learning rate is chosen from the set $\{0.01, 0.05\}$ and coupled with a cosine annealing technique. We set the number of epochs to $100$ for benchmark datasets and $200$ for the CRC-{\scriptsize{MIPL}} dataset. The weight of the margin distribution loss is chosen from the set $\{0.01, 0.05, 0.1, 0.5, 1, 3, 5\}$ for all datasets.  
For the annealing process of the temperature parameter, the initial temperature parameter $\tau^{(0)}=5$.
Additionally, $\tau_m=0.1$ and $\tau_m=0.5$ are used for benchmark datasets and the CRC-{\scriptsize{MIPL}} dataset, respectively. The dataset partitioning method aligns with that of {\demipl} \cite{tang2023demipl} and {\elimipl} \cite{tang2024elimipl}. We execute ten random train/test splits, maintaining a ratio of $7:3$. Mean accuracies and standard deviations from these ten runs are reported. The code of {\ours} can be found at \url{https://github.com/tangw-seu/MIPLMA}.

\subsection{Comparison with MIPL and PLL Algorithms}
Since PLL algorithms can not directly handle the multi-instance bags, we utilize two data degradation strategies: the \emph{Mean} strategy and the \emph{MaxMin} strategy \cite{tang2023miplgp}. The former computes the average feature values across all instances within a bag for producing a bag-level feature representation. The latter identifies both the maximum and minimum feature values for each dimension among instances within a multi-instance bag and concatenates these values to form a bag-level feature representation.

\begin{table}[!t]
 \caption{The classification accuracies (mean$\pm$std) of {\ours} and comparative algorithms on the benchmark datasets with the varying numbers of false positive labels ($r \in \{1,2,3\}$). }
\label{tab:res_benchmark}
  \centering  
  \scriptsize 
     \begin{tabular}{p{1.cm} |c|cc|cc|cc|cc}
    \hline  \hline
    \multicolumn{1}{l|}{Algorithm} 	& $r$ 	& \multicolumn{2}{c|}{MNIST-{\tiny{MIPL}}} 					& \multicolumn{2}{c|}{FMNIST-{\tiny{MIPL}}} 				& \multicolumn{2}{c|}{Birdsong-{\tiny{MIPL}}} 				& \multicolumn{2}{c}{SIVAL-{\tiny{MIPL}}} \\
    \hline
    \multicolumn{1}{l|}{\multirow{3}[0]{*}{{\ours}}}
    							& 1     	& \multicolumn{2}{c|}{.985$\pm$.010}			& \multicolumn{2}{c|}{\textbf{.915$\pm$.016}} 			& \multicolumn{2}{c|}{\textbf{.776$\pm$.020}}		& \multicolumn{2}{c}{\textbf{.703$\pm$.026}} \\
          						& 2  		& \multicolumn{2}{c|}{.979$\pm$.014} 			& \multicolumn{2}{c|}{\textbf{.867$\pm$.028}}			& \multicolumn{2}{c|}{\textbf{.762$\pm$.015}} 		& \multicolumn{2}{c}{\textbf{.668$\pm$.031}} \\
          						& 3     	& \multicolumn{2}{c|}{\textbf{.749$\pm$.103}}			& \multicolumn{2}{c|}{.654$\pm$.055} 			& \multicolumn{2}{c|}{\textbf{.746$\pm$.013}}		& \multicolumn{2}{c}{\textbf{.627$\pm$.024}} \\
     \hline
     \multicolumn{1}{l|}{\multirow{3}[0]{*}{{\elimipl}}}
    							& 1     	& \multicolumn{2}{c|}{\textbf{.992$\pm$.007}} 			& \multicolumn{2}{c|}{.903$\pm$.018} 			& \multicolumn{2}{c|}{.771$\pm$.018}			& \multicolumn{2}{c}{.675$\pm$.022}	 \\
          						& 2  		& \multicolumn{2}{c|}{\textbf{.987$\pm$.010}}				& \multicolumn{2}{c|}{.845$\pm$.026} 			& \multicolumn{2}{c|}{.745$\pm$.015} 			& \multicolumn{2}{c}{.616$\pm$.025} \\
          						& 3     	& \multicolumn{2}{c|}{.748$\pm$.144} 			& \multicolumn{2}{c|}{\textbf{.702$\pm$.055}}				& \multicolumn{2}{c|}{.717$\pm$.017} 		& \multicolumn{2}{c}{.600$\pm$.029} \\
     \hline
    \multicolumn{1}{l|}{\multirow{3}[0]{*}{{\demipl}}}
    							& 1     	& \multicolumn{2}{c|}{.976$\pm$.008}		& \multicolumn{2}{c|}{.881$\pm$.021} 			& \multicolumn{2}{c|}{.744$\pm$.016} 		& \multicolumn{2}{c}{.635$\pm$.041}	 \\
          						& 2    	& \multicolumn{2}{c|}{.943$\pm$.027} 	& \multicolumn{2}{c|}{.823$\pm$.028}				& \multicolumn{2}{c|}{.701$\pm$.024}		& \multicolumn{2}{c}{.554$\pm$.051}	 \\
          						& 3     	& \multicolumn{2}{c|}{.709$\pm$.088}		& \multicolumn{2}{c|}{.657$\pm$.025} 			& \multicolumn{2}{c|}{.696$\pm$.024}	 		& \multicolumn{2}{c}{.503$\pm$.018}	 \\
     \hline
   	\multicolumn{1}{l|}{\multirow{3}[0]{*}{{\miplgp}}} 
							& 1		& \multicolumn{2}{c|}{.949$\pm$.016} 		& \multicolumn{2}{c|}{.847$\pm$.030}		& \multicolumn{2}{c|}{.716$\pm$.026}	& \multicolumn{2}{c}{.669$\pm$.019} \\
							& 2		& \multicolumn{2}{c|}{.817$\pm$.030} 		& \multicolumn{2}{c|}{.791$\pm$.027}		& \multicolumn{2}{c|}{.672$\pm$.015} 	& \multicolumn{2}{c}{.613$\pm$.026} \\
							& 3		& \multicolumn{2}{c|}{.621$\pm$.064} 		& \multicolumn{2}{c|}{.670$\pm$.052} 	& \multicolumn{2}{c|}{.625$\pm$.015}			& \multicolumn{2}{c}{.569$\pm$.032} \\
     \hline  
     &       & Mean  & MaxMin & Mean  & MaxMin & Mean  & MaxMin & Mean  & MaxMin \\ 
     \hline
 \multicolumn{1}{l|}{\multirow{3}[0]{*}{{\proden}}}  
    		  					& 1     	& .605$\pm$.023	& .508$\pm$.024 		& .697$\pm$.042 	& .424$\pm$.045   		& .296$\pm$.014	& .387$\pm$.014   	& .219$\pm$.014	& .316$\pm$.019   \\
          						& 2     	& .481$\pm$.036	& .400$\pm$.037    		& .573$\pm$.026 	& .377$\pm$.040   		& .272$\pm$.019 	& .357$\pm$.012   	 & .184$\pm$.014 	& .287$\pm$.024 \\
          						& 3     	& .283$\pm$.028  	& .345$\pm$.048 		& .345$\pm$.027  	& .309$\pm$.058		& .211$\pm$.013   	& .336$\pm$.012  	& .166$\pm$.017  	& .250$\pm$.018\\
     \hline
 \multicolumn{1}{l|}{\multirow{3}[0]{*}{{\rc}}} 
    		  					& 1     	& .658$\pm$.031 	& .519$\pm$.028 		& .753$\pm$.042 	& .731$\pm$.027    		& .362$\pm$.015 	& .390$\pm$.014    	& .279$\pm$.011	& .306$\pm$.023   \\
          						& 2     	& .598$\pm$.033  	& .469$\pm$.035  		& .649$\pm$.028  	& .666$\pm$.027 		& .335$\pm$.011	& .371$\pm$.013    	& .258$\pm$.017	& .288$\pm$.021  \\
          						& 3     	& .392$\pm$.033  	& .380$\pm$.048  		& .401$\pm$.063 	& .524$\pm$.034     		& .298$\pm$.009   	& .363$\pm$.010 		& .237$\pm$.020  	& .267$\pm$.020 \\
     \hline
	\multicolumn{1}{l|}{\multirow{3}[0]{*}{{\lws}}} 
    		  					& 1     	& .463$\pm$.048  	& .242$\pm$.042 		& .726$\pm$.031 	& .435$\pm$.049   		& .265$\pm$.010 	& .225$\pm$.038  	& .240$\pm$.014  	& .289$\pm$.017\\
          						& 2     	& .209$\pm$.028  	& .239$\pm$.048 		& .720$\pm$.025  	& .406$\pm$.040  		& .254$\pm$.010  	& .207$\pm$.034    	& .223$\pm$.008  	& .271$\pm$.014 \\
          						& 3     	& .205$\pm$.013  	& .218$\pm$.017		& .579$\pm$.041 	& .318$\pm$.064 		& .237$\pm$.005	& .216$\pm$.029    	& .194$\pm$.026  	& .244$\pm$.023\\
     \hline
\multicolumn{1}{l|}{\multirow{3}[0]{*}{{\cavl}}} 
    		  					& 1     	& .596$\pm$.074  	& .481$\pm$.030		& .728$\pm$.047  	& .370$\pm$.025 		& .370$\pm$.012	& .354$\pm$.015     		& .260$\pm$.013  	& .251$\pm$.023\\
          						& 2     	& .412$\pm$.039  	& .389$\pm$.027 		& .586$\pm$.055  	& .264$\pm$.037  		& .316$\pm$.017  	& .335$\pm$.008  		& .237$\pm$.001 	& .216$\pm$.011 \\
          						& 3     	& .315$\pm$.020 	& .292$\pm$.032  		& .353$\pm$.025  	& .265$\pm$.025  		& .272$\pm$.031   	& .313$\pm$.017   		& .197$\pm$.014  	& .175$\pm$.020\\
     \hline
\multicolumn{1}{l|}{\multirow{3}[0]{*}{{\pop}}}  
    		  					& 1     	& .657$\pm$.033  	& .511$\pm$.032		& .799$\pm$.032  	& .409$\pm$.044 		& .383$\pm$.009  	& .388$\pm$.015		& .295$\pm$.012	& .316$\pm$.015  \\
          						& 2     	& .585$\pm$.045  	& .438$\pm$.037		& .725$\pm$.025  	& .395$\pm$.028		& .348$\pm$.011  	& .360$\pm$.018		& .278$\pm$.020  	& .296$\pm$.020\\
          						& 3     	& .335$\pm$.022  	& .362$\pm$.034 		& .619$\pm$.049  	& .324$\pm$.032		& .312$\pm$.012  	& .345$\pm$.014		& .251$\pm$.021  	& .255$\pm$.010 \\
     \hline 
 \multicolumn{1}{l|}{\multirow{3}[0]{*}{{\aggd}}} 
    		  					& 1     	& .671$\pm$.027  	& .527$\pm$.035		& .743$\pm$.026  	& .391$\pm$.040  		& .353$\pm$.019   	& .383$\pm$.014   			& .355$\pm$.015  	& .397$\pm$.028 \\
          						& 2     	& .595$\pm$.036  	& .439$\pm$.020  		& .677$\pm$.028  	& .371$\pm$.037  		& .314$\pm$.018 	& .372$\pm$.020   			& .315$\pm$.019  	& .360$\pm$.029 \\
          						& 3     	& .380$\pm$.032  	& .321$\pm$.043 		& .474$\pm$.057   	& .327$\pm$.028  		& .296$\pm$.015 	& .344$\pm$.011     			& .286$\pm$.018 	& .328$\pm$.023  \\
     \hline  \hline
    \end{tabular}
\end{table}

\subsubsection{Results on the Benchmark Datasets}
\label{subsec:res_benchmark}
Table \ref{tab:res_benchmark} provides a comprehensive comparison of the results achieved by {\ours}, three MIPL algorithms ({\elimipl} \cite{tang2024elimipl}, {\demipl} \cite{tang2023demipl}, and {\miplgp} \cite{tang2023miplgp}), five deep-learning-based PLL algorithms ({\proden} \cite{LvXF0GS20}, {\rc} \cite{FengL0X0G0S20}, {\lws} \cite{WenCHL0L21}, {\cavl} \cite{ZhangF0L0QS22}, and {\pop} \cite{XuLLQG23}) with linear model, and the feature-aware disambiguation PLL algorithm ({\aggd} \cite{wangdb2022}). The evaluation is conducted on benchmark datasets with varying numbers of false positive labels ($r \in \{1, 2, 3\} $). 

Notably, {\ours} consistently exhibits higher average accuracy than the three MIPL algorithms in $33$ out of $36$ cases. For the methods based on the embedding space paradigm, {\ours} demonstrates superior performance compared to {\elimipl} and {\demipl} in $21$ out of $24$ cases. Compared to {\miplgp} that follows the instance space paradigm, {\ours} achieves higher average accuracies than it in all cases. Specifically, on the SIVAL-{\scriptsize{MIPL}} dataset, the average accuracies of {\miplgp} consistently exceed those of {\demipl}. However, the average accuracies of {\ours} surpass all algorithms on the SIVAL-{\scriptsize{MIPL}} dataset, thus highlighting the effectiveness of {\ours}.

Additionally, {\ours} significantly outperforms PLL algorithms in all cases. For relatively simple datasets such as MNIST-{\scriptsize{MIPL}} and FMNIST-{\scriptsize{MIPL}}, the PLL algorithms demonstrate satisfactory performance. However, with the increasing complexity of datasets, as observed in Birdsong-{\scriptsize{MIPL}} and SIVAL-{\scriptsize{MIPL}}, the effectiveness of the PLL algorithms noticeably diminished. On MNIST-{\scriptsize{MIPL}} and FMNIST-{\scriptsize{MIPL}}, the Mean strategy generally outperforms the MaxMin strategy. Conversely, on Birdsong-{\scriptsize{MIPL}} and SIVAL-{\scriptsize{MIPL}}, the MaxMin strategy tends to yield superior results in most cases compared to the Mean strategy. Hence, the two data degradation strategies do not uniformly outperform each other but have their respective advantages. The selection of the degradation strategy is dependent on the characteristics of the dataset. For simpler datasets, a straightforward Mean strategy may suffice, while for more complex datasets, a sophisticated MaxMin strategy may be preferable.

\subsubsection{Results on the Real-World Datasets}
\label{subsec:res_real}
Table \ref{tab:res_crc} presents a detailed comparison of results on the CRC-{\scriptsize{MIPL}} dataset. Our method, {\ours}, demonstrates superior performance in all $11$ cases when compared to {\elimipl} \cite{tang2024elimipl}, {\demipl} \cite{tang2023demipl}, and {\miplgp} \cite{tang2023miplgp}. In terms of the PLL algorithm, {\ours} also achieves superior accuracies in all cases. While the PLL algorithms yield satisfactory results on relatively simple datasets like CRC-{\scriptsize{MIPL-Row}} and CRC-{\scriptsize{MIPL-SBN}}, their performances noticeably deteriorate when handling more complex datasets such as CRC-{\scriptsize{MIPL-KMeans}} and CRC-{\scriptsize{MIPL-SIFT}}. 

Moreover, both {\ours} and {\elimipl} demonstrate significantly better performance on the CRC-{\scriptsize{MIPL-KMeans}} and CRC-{\scriptsize{MIPL-SIFT}} datasets compared to the CRC-{\scriptsize{MIPL-Row}} and CRC-{\scriptsize{MIPL-SBN}} datasets. However, this trend is reversed for {\miplgp} and the PLL algorithms. We attribute this discrepancy to the incapacity of these algorithms to effectively model complex features. Particularly, the limitations of the PLL algorithms become more apparent when dealing with complex MIPL data. Therefore, there is an urgent need to devise more effective MIPL algorithms. 

\begin{table}[!t]
\caption{The classification accuracies (mean$\pm$std) of {\ours} and comparative algorithms on the real-world datasets. -- means unavailable due to computational limitations.} 
\label{tab:res_crc}
\centering  
\scriptsize  
\begin{tabular}{p{1.2cm} |cc|cc|cc|cc}
\hline  \hline
\multicolumn{1}{l|}{Algorithm}  	& \multicolumn{2}{c|}{C-Row}   				& \multicolumn{2}{c|}{C-SBN}   				& \multicolumn{2}{c|}{C-KMeans}  				& \multicolumn{2}{c}{C-SIFT} \\
\hline
\multicolumn{1}{l|}{{\ours}}  		& \multicolumn{2}{c|}{\textbf{.444$\pm$.010}} 			& \multicolumn{2}{c|}{\textbf{.526$\pm$.009}} 			& \multicolumn{2}{c|}{\textbf{.557$\pm$.010}}			& \multicolumn{2}{c}{\textbf{.553$\pm$.009}} \\
\multicolumn{1}{l|}{{\elimipl}}  	& \multicolumn{2}{c|}{.433$\pm$.008} 			& \multicolumn{2}{c|}{.509$\pm$.007} 			& \multicolumn{2}{c|}{.546$\pm$.012} 			& \multicolumn{2}{c}{.540$\pm$.010} \\
\multicolumn{1}{l|}{{\demipl}}  	& \multicolumn{2}{c|}{.408$\pm$.010} 			& \multicolumn{2}{c|}{.486$\pm$.014}  			& \multicolumn{2}{c|}{.521$\pm$.012}			& \multicolumn{2}{c}{.532$\pm$.013} \\
\multicolumn{1}{l|}{{\miplgp}}  	& \multicolumn{2}{c|}{.432$\pm$.005} 			& \multicolumn{2}{c|}{.335$\pm$.006} 			& \multicolumn{2}{c|}{.329$\pm$.012}  			& \multicolumn{2}{c}{--} \\
\hline 
&      Mean  & MaxMin & Mean  & MaxMin & Mean  & MaxMin & Mean  & MaxMin \\ 
     \hline  
\multicolumn{1}{l|}{{\proden}} 	& .365$\pm$.009	& .401$\pm$.007		& .392$\pm$.008 	& .447$\pm$.011			& .233$\pm$.018	& .265$\pm$.027 		& .334$\pm$.029 	& .291$\pm$.011 \\
 \multicolumn{1}{l|}{{\rc}}    	& .214$\pm$.011	& .227$\pm$.012  		& .242$\pm$.012 	& .338$\pm$.010			& .226$\pm$.009	& .208$\pm$.007		& .209$\pm$.007 	& .246$\pm$.008 \\
 \multicolumn{1}{l|}{{\lws}}   	& .291$\pm$.010 	& .299$\pm$.008		& .310$\pm$.006	& .382$\pm$.009  			& .237$\pm$.008	& .247$\pm$.005		& .270$\pm$.007	& .230$\pm$.007 \\
\multicolumn{1}{l|}{{\cavl}}   	& .312$\pm$.043 	& .368$\pm$.054 		& .364$\pm$.066 	& .503$\pm$.025			& .286$\pm$.062 	& .311$\pm$.038 		& .329$\pm$.033 	& .274$\pm$.018 \\
 \multicolumn{1}{l|}{{\pop}}	& .383$\pm$.010 	& .393$\pm$.015		& .439$\pm$.009 	& .438$\pm$.010			& .385$\pm$.016	& .279$\pm$.016		& .326$\pm$.013	& .278$\pm$.040 \\
\multicolumn{1}{l|}{{\aggd}} 	& .412$\pm$.008 	& .460$\pm$.008		& .480$\pm$.005	& .524$\pm$.008 			& .358$\pm$.008	& .434$\pm$.009		& .363$\pm$.012	& .285$\pm$.009 \\
  \hline  \hline  
\end{tabular}
\end{table}

\subsubsection{Results of the CRC-{\scriptsize{MIPL}} Dataset with Deep Features}
\citet{tang2023demipl} have introduced the CRC-{\scriptsize{MIPL}} dataset, extracting multi-instance features using four hand crafted image bag generators \cite{WeiZ16}. Both {\demipl} \cite{tang2023demipl} and {\elimipl} \cite{tang2024elimipl} were evaluated using these multi-instance features in the literature. In this study, we investigate CRC-{\scriptsize{MIPL}} with neural network generated features and employ ResNet to learn deep multi-instance features from the CRC-{\scriptsize{MIPL}} dataset. The resulting datasets are referred to as C-R34-16 and C-R34-25.

\begin{wraptable}{r}{0.4\textwidth}
\setlength{\abovecaptionskip}{0.cm} 
\vspace{-0.2cm} 
    \centering
    \footnotesize
    \caption{The classification accuracies (mean$\pm$std) on the CRC-{\scriptsize{MIPL}} dataset with deep multi-instance features.}
    \label{tab:res_crc_resnet} 
    \begin{tabular}{p{1.9cm} | cc }
	\hline  \hline
	\multicolumn{1}{l|}{Algorithm}  	& C-R34-16 	& C-R34-25 	\\
	\hline
	\multicolumn{1}{l|}{{\ours}}  	& \textbf{.631$\pm$.008}	& \textbf{.685$\pm$.011}\\
	\multicolumn{1}{l|}{{\elimipl}}  	& .628$\pm$.009	& .663$\pm$.009\\
	\multicolumn{1}{l|}{{\demipl}}  	& .625$\pm$.008	& .650$\pm$.010\\
	\hline  \hline
    \end{tabular}
\end{wraptable}

Table \ref{tab:res_crc_resnet} illustrates the classification accuracies of {\ours}, {\elimipl}, and {\demipl} on the CRC-{\scriptsize{MIPL}} dataset with deep multi-instance features. From the experimental results, two key observations emerge: (a) ResNet-34-based features outperform those generated by image bag generators in terms of classification performance. (b) When learning multi-instance features with ResNet-34, dividing an image into $25$ instances results in a more discriminative feature representation compared to using $16$ instances. 

In summary, feature representations learned by the deep feature extractor ResNet-34 exhibit higher discriminative capacity compared to those generated by image bag generators. Our model consistently achieves the highest classification accuracy among these three MIPL algorithms, especially on the C-R34-25 dataset. These observations suggest that our model not only achieves the highest classification accuracy on traditional features but also handles deep features better.

\subsection{Effectiveness of the Margin Adjustment}
To assess the effectiveness of margin adjustment, we introduce three variants of {\ours}. {\oursins} denotes the margin adjustment of attention scores exclusively, with $\lambda$ in Eq. (\ref{eq:full_loss}) set to $0$. {\ourslab} signifies the margin adjustment of predicted probabilities only, where $\tau^{(t)}$ in Eq. (\ref{eq:attention}) is assigned $1$ for $t = 1, 2, \cdots, T$. {\ourswo} indicates no margin adjustments of attention scores or predicted probabilities, with $\lambda=0$ and $\tau^{(t)}=1$ for $t = 1, 2, \cdots, T$.

\begin{wrapfigure}{r}{0.51\textwidth}
\setlength{\abovecaptionskip}{0.2cm} 
\setlength{\belowcaptionskip}{-0.3cm} 
\vspace{-0.4cm}  
  \centering
  \begin{overpic}[width=0.48\textwidth]{./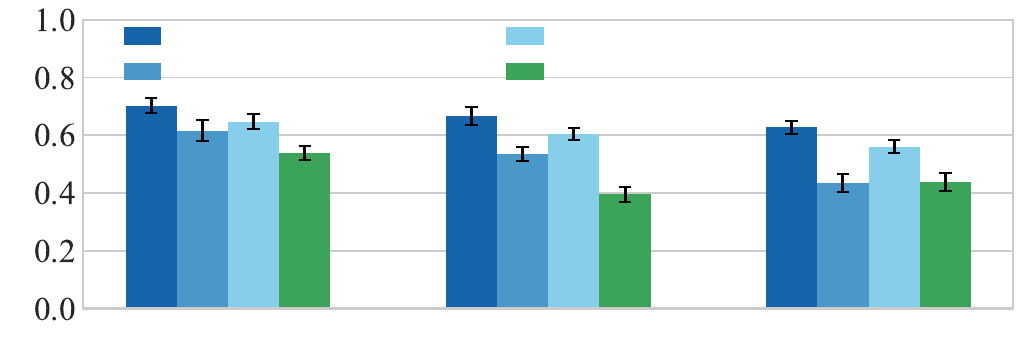}   
    \put(-9, 80) {\rotatebox{90}{accuracy}} 
    \put(190, -8) {$r=1$} 
    \put(500, -8) {$r=2$} 
    \put(810, -8) {$r=3$} 
    \put(170, 282) {\scriptsize \ours} 
    \put(170, 248) {\scriptsize \oursins} 
    \put(540, 282) {\scriptsize \ourslab} 
    \put(540, 248) {\scriptsize \ourswo} 
  \end{overpic}
  \caption{The classification accuracies (mean and std) of {\ours} with the three variants on the SIVAL-{\scriptsize{MIPL}} dataset ($r \in \{1,2,3\}$).}
  \label{fig:margin_adjustment}
\end{wrapfigure}

Figure \ref{fig:margin_adjustment} demonstrates that {\ours} consistently outperforms its three variants, proving that margin adjustment in the instance and label spaces can significantly enhance model performance. Additionally, adjusting the margin in the label space yields better results than in the instance space, validating the effectiveness of our proposed margin distribution loss.
From another perspective, adjusting the margin of predicted probabilities directly impacts classification accuracies, whereas adjusting the margin of attention scores affects the bag-level feature representations, thereby influencing classification accuracies. Consequently, {\ourslab} demonstrates superior performance than {\oursins}. By simultaneously adjusting the margins in both the instance and label spaces, \ie {\ours}, optimal results can be achieved. This underscores the effectiveness of our margin adjustment strategy in dealing with the inexact supervision of MIPL.

\begin{table}[!t]
\caption{The classification accuracies (mean$\pm$std) of {\maam}, {\att}, and {\attgate} on the MIL datasets with bag-level true labels.}
\label{tab:res_maam}
\centering  
\footnotesize
\begin{tabular}{p{1.9cm} | cc }
\hline  \hline
\multicolumn{1}{l|}{Algorithm}  	& MNIST-{\tiny{MIPL} (MIL)} 	& FMNIST-{\tiny{MIPL} (MIL)} 	\\
\hline
\multicolumn{1}{l|}{{\maam}}  	& \textbf{.991$\pm$.006}	& \textbf{.930$\pm$.021}\\
\multicolumn{1}{l|}{{\att}}  		& .962$\pm$.010	& .859$\pm$.032\\
\multicolumn{1}{l|}{{\attgate}}  	& .971$\pm$.017	& .847$\pm$.037\\
\hline  \hline
\end{tabular}
\vspace{-0.3cm}
\end{table}

\begin{table}[!t]
\caption{The classification accuracies (mean$\pm$std) of {\prodenma} and {\proden} on the Kuzushiji-MNIST dataset with varying flipping probability $q$.}
\label{tab:res_proden}
\centering  
\footnotesize
\begin{tabular}{p{1.9cm} | ccccc }
\hline  \hline
\multicolumn{1}{l|}{Algorithm}  	& $q=0.1$ 	& $q=0.3$ 	& $q=0.5$		& $q=0.7$		& $q=0.9$	\\
\hline
\multicolumn{1}{l|}{{\prodenma}}  	& \textbf{.932$\pm$.001} 	& \textbf{.926$\pm$.002}	& \textbf{.914$\pm$.002}	& \textbf{.892$\pm$.001} 	& \textbf{.816$\pm$.008}	\\
\multicolumn{1}{l|}{{\proden}}  	& .906$\pm$.002	& .900$\pm$.001	& .884$\pm$.005	& .876$\pm$.010	& .772$\pm$.017	\\
\hline  \hline
\end{tabular}
\vspace{-0.3cm}
\end{table}

\subsection{Margin Adjustment for MIL and PLL Algorithms}
\label{subsec:ma_mil_pll}
In {\ours}, the margin adjustments for attention scores and predicted probabilities reduce the supervision inexactness in the instance space and the label space, respectively. MIPL is a generalized framework of MIL and PLL. Therefore, this raises a pertinent question: can margin adjustment enhance the performance of MIL and PLL algorithms?

To answer this question, we propose a MIL algorithm named {\maam} that is a simplified variant of {\ours}. We compare {\maam} with two classical MIL methods incorporating attention mechanisms, namely {\att} \cite{IlseTW18} and {\attgate} \cite{IlseTW18}, on the MNIST-{\scriptsize{MIPL} (MIL)} and FMNIST-{\scriptsize{MIPL} (MIL)} datasets. During training, we use only the features of multi-instance bags and their corresponding bag-level true labels. The parameters of {\maam} for the two datasets are as follows: learning rates of $0.005$ for MNIST-{\scriptsize{MIPL}} and $0.01$ for FMNIST-{\scriptsize{MIPL}}. Additionally, $\tau^{(0)}=5$ and $\tau_{m}=0.1$ for both datasets. Table \ref{tab:res_maam} presents the classification accuracies over ten runs, indicating that {\maam} outperforms both {\att} and {\attgate}, particularly on the FMNIST-{\scriptsize{MIPL}} dataset. These results demonstrate the effectiveness of {\maam}, confirming its superior performance.

Moreover, we equip the PLL algorithm {\proden} with the margin distribution loss, resulting in the variant {\prodenma}. Table \ref{tab:res_proden} presents the classification accuracies of {\prodenma} and {\proden} on the Kuzushiji-MNIST dataset with varying flipping probability $q \in \{0.1, 0.3, 0.5, 0.7, 0.9\}$. The only difference between {\prodenma} and {\proden} lies in that {\prodenma} includes the margin distribution loss with the weight of $1$. We employ MLP as the backbone network for both {\prodenma} and {\proden}, and keep all other parameters consistent. Experimental results indicate that {\prodenma} outperforms {\proden} across all scenarios. Notably, under higher disambiguation difficulty, \ie $q=0.9$, the superiority of {\prodenma} is more pronounced. 

In summary, adjusting the margins of attention scores improves classification performance for MIL algorithms. Similarly, margin adjustment in the label space enhances the performance of PLL algorithms, particularly in challenging disambiguation scenarios.

\section{Conclusion}
\label{sec:con}
This paper investigates the margin adjustments in both the instance space and the label space for MIPL. We propose {\ours}, which incorporates a margin-aware attention mechanism and a margin distribution loss to adjust the margins for attention scores and predicted probabilities, respectively. Experimental results on the benchmark and real-world datasets illustrate the superiority of our proposed {\ours} algorithm over a diverse set of baselines, encompassing MIPL, PLL, and MIL algorithms. Specifically, {\ours} achieves superior performances compared to baselines in $96.4\%$ of cases. These results underscore the effectiveness and significance of our margin adjustment strategy.

However, {\ours} has several limitations. First, similar to other attention-based MIL and MIPL methods, it cannot process multiple multi-instance bags simultaneously. Second, {\ours} demonstrates a slight overfitting problem on the relatively simple MNIST-{\scriptsize{MIPL}} dataset. Third, {\ours} is not suitable for instance-level classification tasks.
In the future, we will delve into designing MIPL algorithms capable of instance-level classification and parallel algorithms that can handle multiple multi-instance bags concurrently.

{\small
\bibliographystyle{unsrtnat}
\bibliography{myref}
}

\clearpage

\appendix
\section*{\centering \fontsize{17}{20}\selectfont Multi-Instance Partial-Label Learning with \\Margin Adjustment \\(Appendix)}

\renewcommand*{\thefigure}{A\arabic{figure}}
\renewcommand*{\thetable}{A\arabic{table}}
\renewcommand*{\theequation}{A\arabic{equation}}
\setcounter{table}{0}
\setcounter{figure}{0}
\setcounter{equation}{0}
\setcounter{theorem}{0}

\section{Proof of Theorem \ref{the:invariance}}
\label{sec:proof_the1}
For notation simplicity, we denote the instance-level features $\boldsymbol{H}_i$ as $\boldsymbol{H}$ for the rest of this section. First, we definite the permutation and permutation invariant as follows.
\begin{definition}
\textbf{(Permutation).} We characterize $\mathcal{Q}: \mathbb{R}^{l \times n} \to \mathbb{R}^{l \times n}$ as a permutation operation, denoted by $\mathcal{Q}(\boldsymbol{H}) = \boldsymbol{HQ}$, where $\boldsymbol{Q} \in \mathbb{R}^{n \times n}$ is a permutation matrix employed to rearrange the instance order within $\boldsymbol{H}$. Specifically, $\boldsymbol{Q}$ is an orthogonal matrix, ensuring $\boldsymbol{Q}^\top \boldsymbol{Q} = \boldsymbol{I}$.
\end{definition}

\begin{definition}
\textbf{(Permutation Invariant).} An operator $\mathcal{A}: \mathbb{R}^{l \times n} \to \mathbb{R}^{l}$ is permutation invariant concerning the instance order in $\boldsymbol{H}$ if $\mathcal{Q}(\mathcal{A}(\boldsymbol{H})) = \mathcal{Q}(\boldsymbol{H})$ is satisfied \cite{ZaheerKRPSS17}.
\end{definition}

\begin{theorem}
The margin-aware attention mechanism is permutation invariant.
\end{theorem}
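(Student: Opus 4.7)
The goal is to show that replacing $\boldsymbol{H}$ by $\boldsymbol{HQ}$ for any permutation matrix $\boldsymbol{Q}$ leaves the aggregated bag-level vector $\boldsymbol{z} = \boldsymbol{H}{\boldsymbol{A}'}^{\!\top}$ unchanged. The strategy is to track how $\boldsymbol{Q}$ propagates through the three components of the margin-aware attention mechanism — the attention-logit computation (Eq.~(\ref{eq:attention})), the normalization (Eq.~(\ref{eq:normalization})), and the aggregation (Eq.~(\ref{eq:aggregation})) — showing at each stage that the intermediate vector is right-multiplied by $\boldsymbol{Q}$, and then exploiting orthogonality $\boldsymbol{Q}\boldsymbol{Q}^{\!\top} = \boldsymbol{I}$ in the final step.

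\textbf{Step 1: attention logits are equivariant.} First I would observe that the operations $\boldsymbol{W}_1^{\!\top}\boldsymbol{H}$ and $\boldsymbol{W}_2^{\!\top}\boldsymbol{H}$ act column-wise, so substituting $\boldsymbol{HQ}$ yields $\boldsymbol{W}_1^{\!\top}\boldsymbol{HQ}$ and $\boldsymbol{W}_2^{\!\top}\boldsymbol{HQ}$. Since $\tanh$, $\mathrm{sigm}$, and the Hadamard product $\odot$ act entrywise, they commute with right-multiplication by $\boldsymbol{Q}$, and so does the subsequent projection by $\boldsymbol{W}^{\!\top}$ and scaling by $1/\tau^{(t)}$. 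For the softmax, I would note that softmax applied to a row vector $\boldsymbol{u}$ commutes with coordinate permutations, i.e.\ $\mathrm{softmax}(\boldsymbol{uQ}) = \mathrm{softmax}(\boldsymbol{u})\boldsymbol{Q}$, because the normalizing sum $\sum_j e^{u_j}$ is symmetric in its arguments. Combining these gives $\boldsymbol{A}(\boldsymbol{HQ}) = \boldsymbol{A}(\boldsymbol{H})\boldsymbol{Q}$.

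\textbf{Step 2: normalization preserves equivariance.} Next I would verify that the statistics $\bar{a}$ and the sample standard deviation $\sqrt{\sum_j (a_j - \bar{a})^2/(n-1)}$ in Eq.~(\ref{eq:normalization}) are symmetric functions of the entries of $\boldsymbol{A}$, hence invariant under permutation. Consequently $\bar{\boldsymbol{A}}$ is the same constant vector whether computed from $\boldsymbol{A}$ or $\boldsymbol{A}\boldsymbol{Q}$, and the denominator is unchanged. Writing out the subtraction and division component-wise then shows $\boldsymbol{A}'(\boldsymbol{HQ}) = \boldsymbol{A}'(\boldsymbol{H})\,\boldsymbol{Q}$.

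\textbf{Step 3: aggregation is invariant.} Finally I would apply Eq.~(\ref{eq:aggregation}) to the permuted input:
\begin{equation*}
    \boldsymbol{z}(\boldsymbol{HQ}) \;=\; (\boldsymbol{HQ})\,\bigl(\boldsymbol{A}'(\boldsymbol{H})\,\boldsymbol{Q}\bigr)^{\!\top} \;=\; \boldsymbol{H}\,\boldsymbol{Q}\boldsymbol{Q}^{\!\top}\,{\boldsymbol{A}'(\boldsymbol{H})}^{\!\top} \;=\; \boldsymbol{H}\,{\boldsymbol{A}'(\boldsymbol{H})}^{\!\top} \;=\; \boldsymbol{z}(\boldsymbol{H}),
\end{equation*}
where I use the orthogonality $\boldsymbol{Q}\boldsymbol{Q}^{\!\top}=\boldsymbol{I}$ from the definition of a permutation matrix. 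This establishes the claim.

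\textbf{Expected obstacles.} The only delicate step is Step~1: one must be careful that softmax is taken over the instance dimension and therefore truly commutes with the column permutation $\boldsymbol{Q}$ — if it were taken over a different axis, equivariance would fail. Steps~2 and~3 are essentially bookkeeping once the symmetry of sample mean/variance and the orthogonality of $\boldsymbol{Q}$ are invoked.
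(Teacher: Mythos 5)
Your proposal is correct and follows essentially the same route as the paper's own proof: establish equivariance of the attention scores under right-multiplication by $\boldsymbol{Q}$ (pushing the permutation through the entrywise nonlinearities and the softmax), observe that the mean and standard deviation in the normalization are symmetric and hence the normalized scores are also equivariant, and finally cancel $\boldsymbol{Q}$ against $\boldsymbol{Q}^{\!\top}$ in the aggregation step. The only cosmetic difference is that the paper writes the orthogonality as $\boldsymbol{Q}^{\!\top}\boldsymbol{Q}=\boldsymbol{I}$ while you use $\boldsymbol{Q}\boldsymbol{Q}^{\!\top}=\boldsymbol{I}$; both hold for permutation matrices, so nothing is lost.
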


\begin{proof}
Let $\boldsymbol{H} = \{\boldsymbol{h}_1, \boldsymbol{h}_2, \ldots, \boldsymbol{h}_n\}$ denote the instance-level feature representations of the multi-instance bag $\boldsymbol{X}$. We rewrite the computation of attention scores in Eq. (\ref{eq:attention}) as follows:
\begin{equation}
\begin{aligned}
 \boldsymbol{A} &= \mathcal{AS} (\boldsymbol{H}) \\
 			&=  \text{softmax}(\boldsymbol{W}^\top (\text{tanh}(\boldsymbol{W}_{1}^\top \boldsymbol{H}) \odot \text{sigm}(\boldsymbol{W}_{2}^\top \boldsymbol{H})) / \tau)  \\
			& =  \text{softmax}(\xi (\boldsymbol{H}) / \tau).
\end{aligned}
\end{equation}

The normalization operations of attention scores and the aggregated bag-level feature representation are shown below:
\begin{equation}
\boldsymbol{A}^\prime = \Psi(\boldsymbol{A}) = \Psi(\mathcal{AS} (\boldsymbol{H})),
\end{equation}
\begin{equation}
\mathcal{A}(\boldsymbol{H})  =  \boldsymbol{H} \boldsymbol{A}^\prime = \boldsymbol{H}  \Psi(\mathcal{AS} (\boldsymbol{H})) ^\top,
\end{equation}
where $\Psi(\cdot)$ represents the function of the normalization operations and $\boldsymbol{A}^\prime$ signifies the normalized attention scores.

Given the permutation matrix $\boldsymbol{Q}$ satisfying $\boldsymbol{Q}^\top \boldsymbol{Q} = \boldsymbol{I}$, we obtain the permuted feature $\mathcal{Q}(\boldsymbol{H}) = \boldsymbol{HQ}$. Firstly, the attention scores are computed as follows:
\begin{equation}
\begin{aligned}
\mathcal{AS} (\boldsymbol{HQ}) &=  \text{softmax}(\xi (\boldsymbol{HQ}) / \tau)  \\
		&=\text{softmax}(\boldsymbol{W}^\top (\text{tanh}(\boldsymbol{W}_{1}^\top \boldsymbol{HQ}) \odot \text{sigm}(\boldsymbol{W}_{2}^\top \boldsymbol{HQ})) / \tau)\\
		&=  \text{softmax}(\boldsymbol{W}^\top (\text{tanh}(\boldsymbol{W}_{1}^\top \boldsymbol{H})\boldsymbol{Q}\odot \text{sigm}(\boldsymbol{W}_{2}^\top \boldsymbol{H})\boldsymbol{Q}) / \tau)\\
		&=  \text{softmax}(\boldsymbol{W}^\top (\text{tanh}(\boldsymbol{W}_{1}^\top \boldsymbol{H})\odot \text{sigm}(\boldsymbol{W}_{2}^\top \boldsymbol{H})) \boldsymbol{Q}/ \tau)\\
		&=  \text{softmax}(\xi (\boldsymbol{H}) \boldsymbol{Q} / \tau) \\
         &= \text{softmax}(\xi (\boldsymbol{H}) / \tau) \boldsymbol{Q} \\
         &=  \mathcal{AS} (\boldsymbol{H}) \boldsymbol{Q}.
\end{aligned}
\end{equation}

As depicted in Equation (4), in the normalization operations, neither the denominator nor the mean value of attention scores is affected by the permutation matrix $\boldsymbol{Q}$. Therefore, we can derive the following equation:
\begin{equation}
\Psi(\mathcal{AS} (\boldsymbol{HQ})) = \Psi(\mathcal{AS} (\boldsymbol{H})\boldsymbol{Q}) = \Psi(\mathcal{AS} (\boldsymbol{H}))\boldsymbol{Q}.
\end{equation}
Subsequently, the resultant representation of the aggregated bag-level features is expressed as:
\begin{equation}
\begin{aligned}
    \mathcal{A}(\boldsymbol{HQ})  &=  \boldsymbol{HQ}  \Psi(\mathcal{AS} (\boldsymbol{HQ})) ^\top \\
    &=  \boldsymbol{HQ}  \Psi(\mathcal{AS} (\boldsymbol{H})\boldsymbol{Q}) ^\top \\
		&= \boldsymbol{HQ}  \boldsymbol{Q}^\top \Psi(\mathcal{AS} (\boldsymbol{H})) ^\top \\
		&= \boldsymbol{H}  \Psi(\mathcal{AS} (\boldsymbol{H})) ^\top \\
		&= \mathcal{A}(\boldsymbol{H}).
\end{aligned}
\end{equation}
Therefore, the margin-aware attention mechanism is permutation invariant.
\end{proof}

\section{Pseudo-Code of {\ours}}
\label{subsec:pseudo}
Algorithm \ref{alg:algorithm} describes the complete procedure of {\ours}. First, the algorithm uniformly initializes the weights on the candidate label set (Step $1$). In each epoch, the training set is divided into multiple mini-batches (Step $3$). Then, instance-level feature representations are extracted for each mini-batch and aggregated into bag-level feature representation (Steps $5$-$8$). The subsequent step involves classify the multi-instance bag and updating the weights on the candidate label set (Steps $9$-$10$). Last, the full loss is calculated, and the model is updated  (Steps $12$-$13$).
For an unseen multi-instance bag, instance-level feature representations are learned and aggregated into a bag-level feature representation via the feature extractor and the margin-aware attention mechanism, respectively (Steps $15$-$16$). The predicted label is the category corresponding to the highest prediction probability (Step $17$).
\begin{algorithm}[!t]	
\caption{$Y_* =$ {\ours} $(\mathcal{D}$, $\lambda$, $T$, $\boldsymbol{X}_{*})$ }
\label{alg:algorithm}
\begin{flushleft}
\textbf{Inputs}: \\
$\mathcal{D}$: the MIPL training set $\{(\boldsymbol{X}_i,\mathcal{S}_i)\mid 1 \le i \le m \}$, where $\boldsymbol{X}_i = \{\boldsymbol{x}_{i,1}, \boldsymbol{x}_{i,2}, \cdots, \boldsymbol{x}_{i,n_i}\}$, $\boldsymbol{x}_{i,j} \in \mathcal{X}$, $\mathcal{X}=\mathbb{R}^d$, 
$\mathcal{S}_i  \subset \mathcal{Y}$,  $\mathcal{Y}=\{1, 2, \cdots, k\}$\\
$\lambda$: the weight for the maximum-margin disambiguation loss \\
$T$: the maximum number of training epochs \\
$\boldsymbol{X}_{*}$: the unseen multi-instance bag with $n_*$ instances\\
\textbf{Outputs}: \\
$\boldsymbol{Y}_{*}$ : the predicted label for $\boldsymbol{X}_{*}$ \\
\textbf{Process}: 
\end{flushleft}
\begin{algorithmic}[1]  
\STATE Initialize uniform the weights on candidate label set $p_{i,c}^{(0)}$ as stated by Eq. (\ref{eq:init_p})
\FOR{$t=1$ to $T$}
    \STATE Shuffle training set $\mathcal{D}$ into $B$ mini-batches
    \FOR{$b=1$ to $B$}
        \STATE Extract the instance-level features according to Eq. (\ref{eq:extractor}) \\
	\STATE Calculate the attention scores with the temperature $\tau^{(t)}$ as stated by Eqs. (\ref{eq:attention}) and (\ref{eq:tau})  \\
	\STATE Normalize the attention scores as stated by Eq. (\ref{eq:normalization}) 
	\STATE Aggregate the instance-level feature representations into the bag-level feature representation according to Eq. (\ref{eq:aggregation})	\\
	\STATE Classify the multi-instance bag with predicted probabilities.
        \STATE Update the weights $p_{i,c}^{(t)}$ according to Eq. (\ref{eq:update_p}) \\
    \ENDFOR
    \STATE Calculate the full loss $\mathcal{L}$ according to Eq. (\ref{eq:full_loss})         
    \STATE Update the model $\Phi$ by the optimizer \\
\ENDFOR
\STATE Extract the instance-level features of $\boldsymbol{X}_*$ according to Eq. (\ref{eq:extractor}) \\
\STATE Calculate the attention scores and map the instance-level features into a single vector representation $\boldsymbol{z}_*$ according to Eqs. (\ref{eq:attention}), (\ref{eq:tau}), (\ref{eq:normalization}), and (\ref{eq:aggregation}) \\ 
\STATE Return $Y_{*} = \underset{c \in \mathcal{Y}}{\arg \max}~\hat{p}_{*,c}$
\end{algorithmic}
\end{algorithm}

\section{Additional Experiment Results}
\label{sec:add_res}
\subsection{Comparison with Margin-based PLL Algorithms}
\label{subsec:res_margin}
In the field of PLL, there exist several disambiguation algorithms that rely on the maximum margin criteria, with {\plsvm} and {\mmmpl} emerging as notable algorithms. {\plsvm} achieves disambiguation by maximizing the margin mean between the highest prediction probability on the candidate label set and that on the non-candidate label set. In contrast, {\mmmpl} maximizes the margin mean between the highest and second-highest prediction probabilities of the classifier for disambiguation. Consequently, we conduct a comparative analysis of classification performance using the {\ours} algorithm against {\plsvm} and {\mmmpl}. Figures  \ref{fig:res_margin_1} and present the average accuracy and standard deviation of the three algorithms over ten runs on benchmark and real-world datasets, with Mean and MaxMin representing the corresponding data degradation strategies.

{\ours} consistently outperforms {\plsvm} and {\mmmpl} algorithms on both benchmark and CRC-{\scriptsize{MIPL}} datasets. This superiority can be attributed to two primary factors. First, within {\ours}, the margin distribution loss simultaneously maximizes the mean margin and minimizes the variance of predicted probabilities. In contrast, {\plsvm} and {\mmmpl} only maximize the mean margin of predicted probabilities. Therefore, our proposed margin distribution loss provides a more refined approach to optimizing the margins of predicted probabilities. Second, unlike the data degradation strategies employed by PLL algorithms for the indirect treatment of multi-instance bags, {\ours} aggregates each multi-instance bag into a single bag-level feature representation using the margin-ware attention mechanism. The aggregated feature representations in {\ours} are more discriminative than those obtained using data degradation strategies. Additionally, {\plsvm} and {\mmmpl} employ iterative optimization strategies, rendering them challenging to integrate with deep models.

\begin{figure*}[!t]
\setlength{\abovecaptionskip}{0.3cm} 
\centering
\begin{overpic}[width=140mm]{./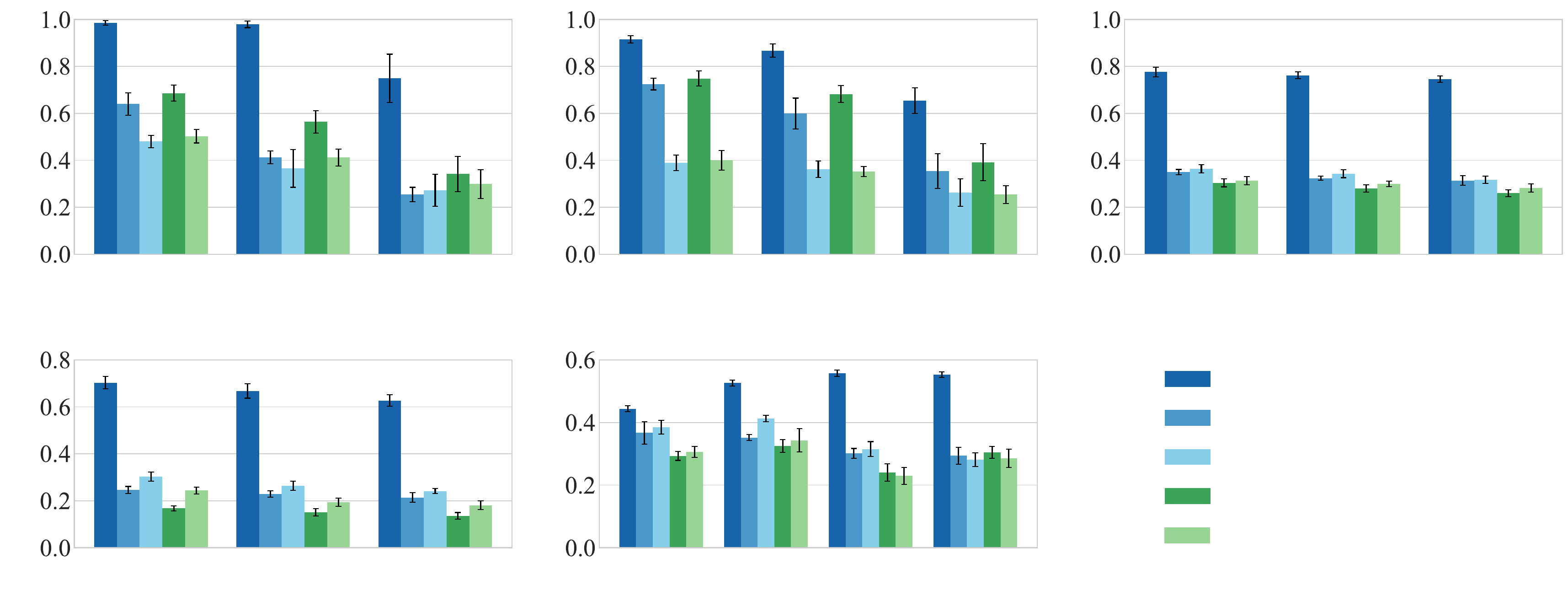}   
\put(5, 60) {\rotatebox{90}{accuracy}} 
\put(340, 60) {\rotatebox{90}{accuracy}} 
\put(5, 257) {\rotatebox{90}{accuracy}} 
\put(340, 257) {\rotatebox{90}{accuracy}} 
\put(675, 257) {\rotatebox{90}{accuracy}} 
\put(72, 208) {$r=1$} 
\put(160, 208) {$r=2$} 
\put(252, 208) {$r=3$} 
\put(130, 184) {MNIST-{\scriptsize{MIPL}}}
\put(403, 208) {$r=1$} 
\put(495, 208) {$r=2$} 
\put(585, 208) {$r=3$} 
\put(470, 184) {FMNIST-{\scriptsize{MIPL}}}
\put(739, 208) {$r=1$} 
\put(830, 208) {$r=2$} 
\put(920, 208) {$r=3$} 
\put(788, 184) {Birdsong-{\scriptsize{MIPL}}}
\put(70, 21) {$r=1$} 
\put(158, 21) {$r=2$} 
\put(250, 21) {$r=3$} 
\put(130, -3) {SIVAL-{\scriptsize{MIPL}}}
\put(402, 21) {\tiny C-Row}
\put(468, 21) {\tiny C-SBN}
\put(524, 21) {\tiny C-KMeans}
\put(600, 21) {\tiny C-SIFT}
\put(480, -3) {CRC-{\scriptsize{MIPL}}}
\put(785, 142) {\small \ours}
\put(785, 116) {\small \mmmpl~(Mean)}
\put(785, 90) {\small \mmmpl~(MaxMin)}
\put(785, 65) {\small \plsvm~(Mean)}
\put(785, 40) {\small \plsvm~(MaxMin)}
\end{overpic}
\caption{The classification accuracies (mean and std) of {\ours}, {\mmmpl}, and {\plsvm}. } 
  \label{fig:res_margin_1}
  \vspace{0.15cm}
\end{figure*}

\subsection{Comparison with MIL Algorithms}
\label{subsec:res_mil}
\begin{table}[!t]
\caption{The classification accuracies (mean$\pm$std) of {\ours} and comparative MIL algorithms on the benchmark datasets with one false positive candidate label ($r=1$).}
\label{tab:res_mil}
\centering  
\footnotesize  
\begin{tabular}{p{1.9cm} | cccc p{0.9cm}  p{0.9cm}  p{0.9cm}  p{0.9cm}}
\hline  \hline
\multicolumn{1}{l|}{Algorithm}  	& MNIST-{\scriptsize{MIPL}} 	& FMNIST-{\scriptsize{MIPL}} 		& Birdsong-{\scriptsize{MIPL}} 		& SIVAL-{\scriptsize{MIPL}} \\
\hline
\multicolumn{1}{l|}{{\ours}}  	& \textbf{.985$\pm$.010}	& \textbf{.915$\pm$.016}	& \textbf{.776$\pm$.020}	& \textbf{.703$\pm$.026} \\
\multicolumn{1}{l|}{{\vwsgp}}  	& .413$\pm$.043 		& .416$\pm$.045  		& .271$\pm$.045		& .050$\pm$.013 \\
\multicolumn{1}{l|}{{\vgpmil}}  	& .448$\pm$.052 		& .481$\pm$.030  		& .072$\pm$.038		& .042$\pm$.006 \\
\multicolumn{1}{l|}{{\lmvgpmil}}  & .483$\pm$.050 		& .465$\pm$.024  		& .079$\pm$.055		& .046$\pm$.008 \\
\multicolumn{1}{l|}{{\mivae}}  	& .692$\pm$.221 		& .603$\pm$.212  		& .142$\pm$.191		& .098$\pm$.133 \\
\multicolumn{1}{l|}{{\att}}  		& .499$\pm$.010 		& .408$\pm$.069  		& .134$\pm$.020		& .100$\pm$.021 \\
\multicolumn{1}{l|}{{\attgate}}  	& .504$\pm$.078 		& .357$\pm$.101  		& .161$\pm$.026		& .110$\pm$.012 \\
\multicolumn{1}{l|}{{\lossatt}}  	& .842$\pm$.051 		& .777$\pm$.048  		& .537$\pm$.025		& .320$\pm$.028 \\
\hline  \hline
\end{tabular}
\vspace{-0.3cm}
\end{table}

Existing MIL algorithms are primarily tailored for addressing binary classification problems and are not readily applicable to solving MIPL problems. To overcome this limitation, we adopt the \emph{One vs. Rest} (\emph{OvR}) decomposition strategy employed in {\miplgp} \cite{tang2023miplgp} for degradation of MIPL data.
Specifically, when presented with a multi-instance bag $\boldsymbol{X}_i$ linked to a candidate label set $\mathcal{S}_i$, we assign each label from the candidate set to the bag, and thus generate $|\mathcal{S}_i|$ multi-instance bags where each bag is associated with a singular bag-level label. For each class $c$ ($c \in \{1,2,\cdots,k\}$), we train and test the $c$-th classifier by transforming the label $c$ to $1$ (positive) and all other labels to $0$ (negative). In the testing phase, a multi-instance bag yields $k$ predictions from the $k$ classifiers. If only one prediction is positive, the corresponding class label of that positive prediction is considered the classification result for the bag. In cases where there is more than one positive prediction among the $k$ predictions, we choose the class label associated with the classifier demonstrating the highest prediction confidence as the classification result for the bag. If all $k$ predictions are negative, the classification result is the class label with the lowest prediction confidence. In particular, {\lossatt} is a multi-class MIL algorithm, eliminating the need for employing the One vs. Rest decomposition strategy. We can directly assign all candidate labels of a multi-instance bag $\boldsymbol{X}_i$ as the label for the multi-instance bag and obtain $|\mathcal{S}_i|$ multi-instance bags with a singular bag-level label.

Table \ref{tab:res_mil} presents the classification accuracy of {\ours} and seven comparative MIL algorithms. Across all scenarios, {\ours} consistently demonstrates significantly superior performance compared to the seven MIL algorithms. 
Notably, among the compared MIL algorithms, {\lossatt} consistently outperforms its counterparts, primarily due to its capability of directly addressing the multi-class MIL problem.
While these algorithms yield reasonable results on the MNIST-{\scriptsize{MIPL}} and FMNIST-{\scriptsize{MIPL}} datasets, their performance diminishes severely on datasets with more intricate features, such as Birdsong-{\scriptsize{MIPL}} and SIVAL-{\scriptsize{MIPL}}. We attribute this challenge to the noise present in the labels due to data degradation, hindering the effective learning of these MIL algorithms.

\subsection{Results of PLL algorithms with MLP}
Among the compared PLL algorithms, \proden \cite{LvXF0GS20}, \rc \cite{FengL0X0G0S20}, \lws \cite{WenCHL0L21}, and \cavl \cite{ZhangF0L0QS22} can be utilized with either linear model or MLP. Tables \ref{tab:res_benchmark} and \ref{tab:res_crc} in the main body present the results obtained using the linear model. In this section, we present the results of {\ours} and the comparative PLL algorithms with MLP on the benchmark and CRC-{\scriptsize{MIPL}} datasets in Tables \ref{tab:res_benchmark_mlp} and \ref{tab:res_crc_mlp}, respectively.

\begin{table}[!t] 
 \caption{The classification accuracies (mean$\pm$std) of {\ours} and comparative PLL algorithms with MLP on the benchmark datasets ($r \in \{1,2,3\}$). }
\label{tab:res_benchmark_mlp}
  \centering  
  \scriptsize 
     \begin{tabular}{p{1.cm} |c|cc|cc|cc|cc}
    \hline  \hline
    \multicolumn{1}{l|}{Algorithm} 	& $r$ 	& \multicolumn{2}{c|}{MNIST-{\tiny{MIPL}}} 					& \multicolumn{2}{c|}{FMNIST-{\tiny{MIPL}}} 				& \multicolumn{2}{c|}{Birdsong-{\tiny{MIPL}}} 				& \multicolumn{2}{c}{SIVAL-{\tiny{MIPL}}} \\
    \hline
    \multicolumn{1}{l|}{\multirow{3}[0]{*}{{\ours}}}
    							& 1     	& \multicolumn{2}{c|}{\textbf{.985$\pm$.010}}			& \multicolumn{2}{c|}{\textbf{.915$\pm$.016}} 			& \multicolumn{2}{c|}{\textbf{.776$\pm$.020}}		& \multicolumn{2}{c}{\textbf{.703$\pm$.026}} \\
          						& 2  		& \multicolumn{2}{c|}{\textbf{.979$\pm$.014}} 			& \multicolumn{2}{c|}{\textbf{.867$\pm$.028}}			& \multicolumn{2}{c|}{\textbf{.762$\pm$.015}} 		& \multicolumn{2}{c}{\textbf{.668$\pm$.031}} \\
          						& 3     	& \multicolumn{2}{c|}{\textbf{.749$\pm$.103}}			& \multicolumn{2}{c|}{\textbf{.654$\pm$.055}} 			& \multicolumn{2}{c|}{\textbf{.746$\pm$.013}}		& \multicolumn{2}{c}{\textbf{.627$\pm$.024}} \\     
	\hline				
						&       & Mean  & MaxMin & Mean  & MaxMin & Mean  & MaxMin & Mean  & MaxMin \\ 
     	\hline
\multicolumn{1}{l|}{\multirow{3}[0]{*}{{\proden}}}  
    		  					& 1     	& .555$\pm$.033  	& .465$\pm$.023		& .652$\pm$.033  	& .358$\pm$.019 		& .303$\pm$.016  	& .339$\pm$.010		& .303$\pm$.020	& .322$\pm$.018  \\
          						& 2     	& .372$\pm$.038  	& .338$\pm$.031		& .463$\pm$.067  	& .315$\pm$.023		& .287$\pm$.017  	& .329$\pm$.016		& .274$\pm$.022  	& .295$\pm$.021\\
          						& 3     	& .285$\pm$.032  	& .260$\pm$.037 		& .288$\pm$.039  	& .265$\pm$.031		& .278$\pm$.006  	& .305$\pm$.015		& .242$\pm$.009  	& .244$\pm$.018 \\
     \hline
 \multicolumn{1}{l|}{\multirow{3}[0]{*}{{\rc}}} 
    		  					& 1     	& .660$\pm$.031  	& .518$\pm$.022		& .697$\pm$.166  	& .421$\pm$.016		& .329$\pm$.014  	& .379$\pm$.014		& .344$\pm$.014  	& .304$\pm$.015 \\
          						& 2     	& .577$\pm$.039  	& .462$\pm$.028		& .684$\pm$.029 	& .363$\pm$.018		& .301$\pm$.014  	& .359$\pm$.015 		& .299$\pm$.015  	& .268$\pm$.023 \\
          						& 3     	& .362$\pm$.029  	& .366$\pm$.039 		& .414$\pm$.050  	& .294$\pm$.053		& .288$\pm$.019  	& .332$\pm$.024 		& .256$\pm$.013  	& .244$\pm$.014\\
     \hline  
\multicolumn{1}{l|}{\multirow{3}[0]{*}{{\lws}}} 
    		  					& 1     	& .605$\pm$.030  	& .457$\pm$.028		& .702$\pm$.033  	& .346$\pm$.033		& .344$\pm$.018  	& .349$\pm$.013		& .346$\pm$.014  	& .345$\pm$.013 \\
          						& 2     	& .431$\pm$.024  	& .351$\pm$.043		& .547$\pm$.040  	& .323$\pm$.031		& .310$\pm$.014  	& .336$\pm$.013 		& .312$\pm$.015  	& .314$\pm$.019  \\
          						& 3     	& .335$\pm$.029  	& .274$\pm$.037 		& .411$\pm$.033  	& .267$\pm$.034		& .289$\pm$.021  	& .307$\pm$.016  		& .286$\pm$.018  	& .268$\pm$.019 \\
     \hline
\multicolumn{1}{l|}{\multirow{3}[0]{*}{{\cavl}}} 
    		  					& 1     	& .539$\pm$.048  	& .497$\pm$.025		& .679$\pm$.031  	& .359$\pm$.024		& .312$\pm$.014  	& .332$\pm$.011 		& .237$\pm$.010  	& .220$\pm$.022  \\
          						& 2     	& .380$\pm$.040  	& .337$\pm$.030 		& .482$\pm$.097  	& .327$\pm$.021		& .285$\pm$.014  	& .303$\pm$.022 		& .197$\pm$.018  	& .199$\pm$.014  \\
          						& 3     	& .266$\pm$.021  	& .267$\pm$.026 		& .288$\pm$.060  	& .266$\pm$.033		& .278$\pm$.014  	& .282$\pm$.010		& .169$\pm$.017 	& .144$\pm$.011 \\    
    \hline
         \multicolumn{1}{l|}{\multirow{3}[0]{*}{{\pop}}}  
    		  					& 1     	& .505$\pm$.029  	& .443$\pm$.032		& .641$\pm$.033  	& .361$\pm$.030 		& .319$\pm$.020  	& .349$\pm$.015		& .376$\pm$.019	& .374$\pm$.014  \\
          						& 2     	& .315$\pm$.022  	& .281$\pm$.026		& .367$\pm$.037  	& .292$\pm$.019		& .299$\pm$.020  	& .337$\pm$.014		& .334$\pm$.029  	& .344$\pm$.012\\
          						& 3     	& .258$\pm$.041  	& .233$\pm$.026 		& .256$\pm$.025  	& .239$\pm$.018		& .281$\pm$.016  	& .312$\pm$.022		& .302$\pm$.022  	& .310$\pm$.022 \\				
     \hline  \hline
    \end{tabular}
\end{table}

\begin{table}[!t] 
\caption{The classification accuracies (mean$\pm$std) of {\ours} and comparative PLL algorithms with MLP on the real-world datasets.} 
\label{tab:res_crc_mlp}
\centering  
\scriptsize  
\begin{tabular}{p{1.2cm} |cc|cc|cc|cc}
\hline  \hline
\multicolumn{1}{l|}{Algorithm}  	& \multicolumn{2}{c|}{C-Row}   				& \multicolumn{2}{c|}{C-SBN}   				& \multicolumn{2}{c|}{C-KMeans}  				& \multicolumn{2}{c}{C-SIFT} \\
\hline
\multicolumn{1}{l|}{{\ours}}  		& \multicolumn{2}{c|}{.444$\pm$.010} 			& \multicolumn{2}{c|}{.526$\pm$.009}			& \multicolumn{2}{c|}{.557$\pm$.010}			& \multicolumn{2}{c}{\textbf{.553$\pm$.009}} \\
\hline 
&      Mean  & MaxMin & Mean  & MaxMin & Mean  & MaxMin & Mean  & MaxMin \\ 
     \hline  
 \multicolumn{1}{l|}{{\proden}} 	& .405$\pm$.012 	& \textbf{.453$\pm$.009}		& .515$\pm$.010 	& \textbf{.529$\pm$.010}			& .512$\pm$.014	& \textbf{.563$\pm$.011}		& .352$\pm$.015	& .294$\pm$.008 \\
\multicolumn{1}{l|}{{\rc}}    		& .290$\pm$.010 	& .347$\pm$.013		& .394$\pm$.010 	& .432$\pm$.008			& .304$\pm$.017	& .366$\pm$.010 		& .248$\pm$.008	& .204$\pm$.008 \\
   \multicolumn{1}{l|}{{\lws}}   	& .360$\pm$.008 	& .381$\pm$.011		& .440$\pm$.009 	& .442$\pm$.009			& .422$\pm$.035	& .335$\pm$.049 		& .338$\pm$.009 	& .287$\pm$.009\\
 \multicolumn{1}{l|}{{\cavl}}   	& .394$\pm$.014 	& .376$\pm$.010		& .475$\pm$.017 	& .356$\pm$.028 			& .409$\pm$.009 	& .483$\pm$.030		& .321$\pm$.014	& .279$\pm$.014 \\
  \multicolumn{1}{l|}{{\pop}} 	& .399$\pm$.011 	& .374$\pm$.019		& .513$\pm$.020 	& .431$\pm$.026			& .503$\pm$.014	& .440$\pm$.017		& .333$\pm$.009	& .293$\pm$.021 \\
  \hline  \hline  
\end{tabular}
\end{table}

Table \ref{tab:res_benchmark_mlp} presents the performance comparison of {\ours} with the four PLL algorithms utilizing the MLP on benchmark datasets. {\ours} consistently outperforms the comparative PLL algorithms in all cases, demonstrating statistically significant superiority. Interestingly, when employing the MLP, the compared PLL algorithms do not consistently achieve superior outcomes compared to the linear model. This observation suggests that the linear model has already captured sufficient features for benchmark datasets, while employing the MLP may lead to overfitting.

Table \ref{tab:res_crc_mlp} demonstrates that {\ours} surpasses the comparative PLL algorithms utilizing the MLP in $37$ out of $40$ cases. Compared to the outcomes obtained using the linear classifiers, the results obtained using the MLP exhibit superior performance. Notably, on the complex CRC-{\scriptsize{MIPL-KMeans}} dataset, the enhancement with the MLP is more pronounced. This indicates that the linear model is inadequate for comprehensive feature learning on the CRC-{\scriptsize{MIPL}} dataset, while the MLP demonstrates a more comprehensive capability in feature learning.

\subsection{Win/tie/loss counts of Experimental Results}
To ensure result reliability, we conduct pairwise t-tests with a significance level of $0.05$. Table \ref{tab:win} summarizes the win/tie/loss counts between {\ours} and seven MIL, seven PLL, and three MIPL algorithms on benchmark datasets with varying false positive labels ($r \in \{1,2,3\}$), as well as the CRC-{\scriptsize{MIPL}} dataset. Several key insights can be drawn from our analysis: (a) {\ours} shows statistical superiority over MIL, PLL, and MIPL in $100\%$, $98.6\%$, and $76.5\%$ of cases, respectively. (b) {\ours} achieves statistical superiority in $97.3\%$ of cases on benchmark datasets. (c) For the CRC-{\scriptsize{MIPL}} dataset, {\ours} shows statistical superiority in $91.6\%$ of cases. Overall, {\ours} outperforms the compared algorithms in $96.4\%$ of cases.
\begin{table*}[!t]
\setlength{\abovecaptionskip}{0.cm} 
\caption{Win/tie/loss counts of {\ours} against the compared algorithms.}
  \label{tab:win}
  \centering  \footnotesize
    \begin{tabular}{cccc|c}
    \hline \hline
          & \multicolumn{3}{l|}{{\ours} against}  	& \multirow{2}[0]{*}{\textbf{In total}} \\ \cdashline{2-4}[2pt/1pt]
          					& MIL algorithms	& PLL algorithms	& MIPL algorithms \\
    \hline
    $r=1$   					& 28/0/0 			& 104/0/0 		& 9/2/1 		& 141/2/1 \\
    $r=2$   					& -- 				& 104/0/0 		& 10/1/1 		& 114/1/1 \\
    $r=3$   					& -- 				& 104/0/0 		& 7/4/1 		& 111/4/1 \\
    CRC-{\scriptsize{MIPL}}	& -- 				& 98/4/2 		& 13/2/0  		& 111/6/2 \\
    \hline
    \textbf{In total} 			& 28/0/0  			& 410/4/2 		& 39/9/3 		& \textbf{477/13/5} \\
    \hline \hline
    \end{tabular}
\end{table*}

\section{Further Analyses}
\label{subsec:further}
\subsection{Interpretability of the Attention Mechanism.}
To investigate the interpretability of attention mechanisms, we compare {\ours} with {\elimipl} and {\demipl} regarding the attention scores across three test multi-instance bags in the FMNIST-{\scriptsize{MIPL}} dataset ($r=1$). Each row in Figure \ref{fig:att_scores} represents a multi-instance bag, with the first to third columns depicting the attention scores produced by {\ours}, {\elimipl}, and {\demipl} across the three multi-instance bags. It is noteworthy that since the sum of attention scores computed by {\demipl} does not equal $1$, we initially normalize them to sum to $1$ before visualization.

Figure \ref{fig:att_scores} reveals several important findings. First, on the Bags $1$ and $2$, both our {\ours} and {\elimipl} accurately identify positive instances. However, compared to {\elimipl}, {\ours} assigns higher attention scores to positive instances and lower scores to negative ones, confirming the efficacy of our margin-aware attention mechanism. Second, the Bag $3$ contains four positive instances, of which {\ours} only identifies three, erroneously assigning high attention scores to several negative instances. However, the highest attention score assigned to a negative instance is still lower than the scores of these three positives. In contrast, {\elimipl} assigns significantly higher attention scores to two negative instances compared to the scores of the four positives, leading to a pronounced influence of negative instances on the aggregated bag-level features. Third, {\demipl} computes attention scores using the sigmoid function followed by normalization, while {\ours} and {\elimipl} directly utilize the softmax function for attention score computation. Results indicate that attention scores computed by {\ours} and {\elimipl} better conform to the distribution of positive and negative instances.

In summary, our margin-aware attention mechanism yields more accurate attention scores. Moreover, when distinguishing negative instances within multi-instance bags proves challenging, our attention mechanism effectively mitigates their influence on bag-level features.

\begin{figure}[!t]
    \centering
    \begin{tabular}{ccc}
      \put(57, 82){{\ours}}
      \put(193, 82){{\elimipl}}
      \put(320, 82){{\demipl}}
      \put(-7, 25) {\rotatebox{90}{Bag $1$}} 
       \vspace{2.5mm} 
      \hspace{2mm} 
        \begin{overpic}[width=0.33\textwidth]{./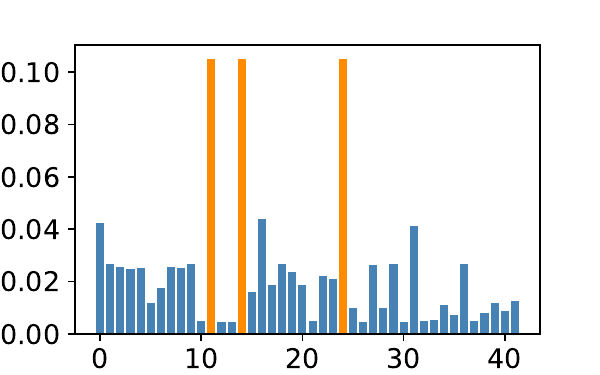}
        		\put(460, -70){\tb{(a)}}
        \end{overpic} &
        \hspace{-4.5mm} 
        \begin{overpic}[width=0.33\textwidth]{./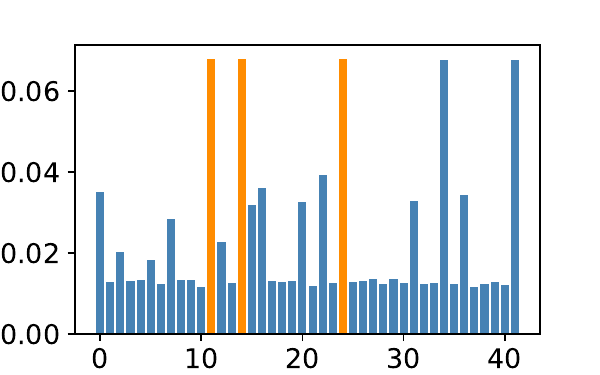}
        		\put(460, -70){\tb{(b)}}
        \end{overpic} &
        \hspace{-6.5mm} 
        \begin{overpic}[width=0.33\textwidth]{./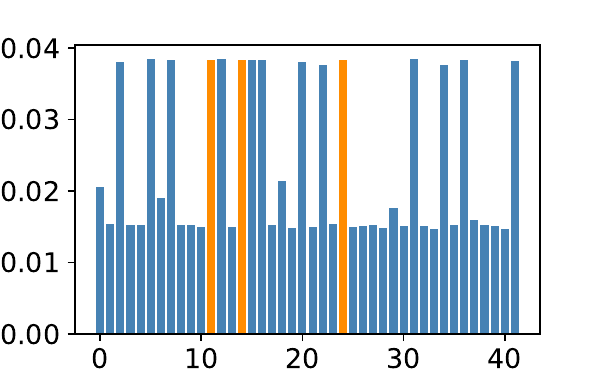}
        		\put(460, -70){\tb{(c)}}
        \end{overpic} \\
        \put(-7, 25) {\rotatebox{90}{Bag $2$}} 
        \vspace{2.5mm} 
        \hspace{2mm} 
        \begin{overpic}[width=0.33\textwidth]{./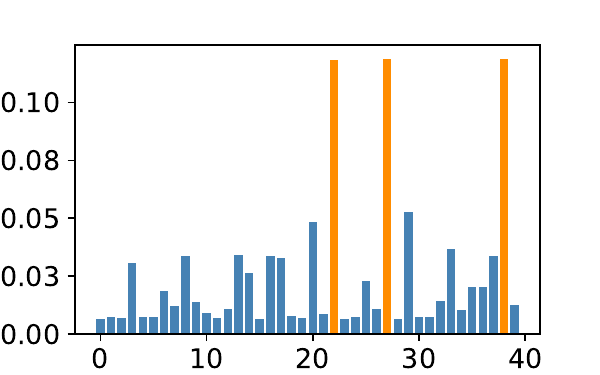}
        		\put(460, -70){\tb{(d)}}
        \end{overpic} &
        \hspace{-4.5mm} 
        \begin{overpic}[width=0.33\textwidth]{./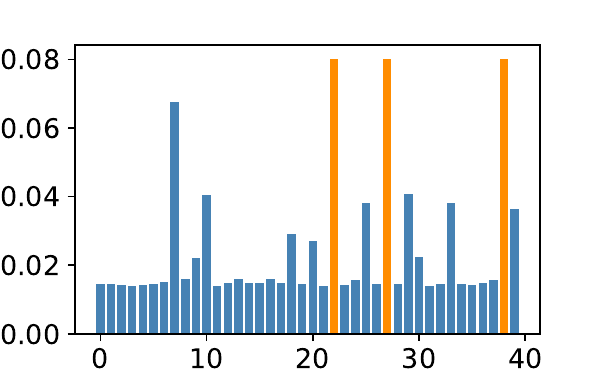}
        		\put(460, -70){\tb{(e)}}
        \end{overpic} &
        \hspace{-6.5mm} 
        \begin{overpic}[width=0.33\textwidth]{./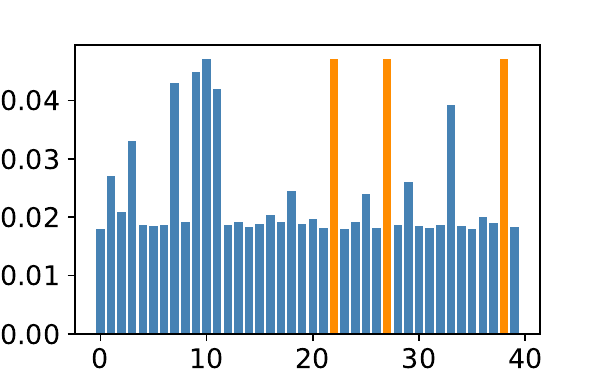}
        		\put(460, -70){\tb{(f)}}
        \end{overpic} \\
        \put(-7, 25) {\rotatebox{90}{Bag $3$}} 
        \vspace{2.5mm} 
        \hspace{2mm} 
        \begin{overpic}[width=0.33\textwidth]{./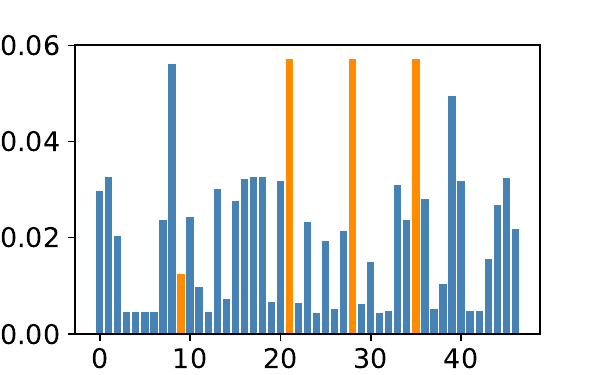}
        		\put(460, -70){\tb{(g)}}
        \end{overpic} &
        \hspace{-4.5mm} 
        \begin{overpic}[width=0.33\textwidth]{./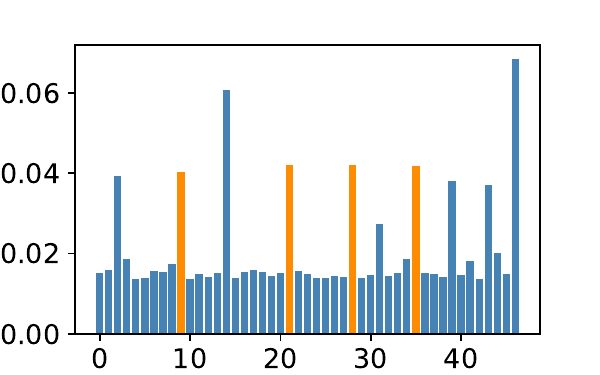}
        		\put(460, -70){\tb{(h)}}
        \end{overpic} &
        \hspace{-6.5mm} 
        \begin{overpic}[width=0.33\textwidth]{./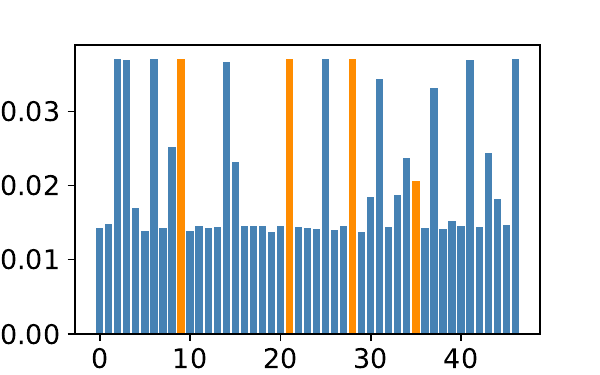}
        		\put(460, -70){\tb{(i)}}
        \end{overpic} \\
    \end{tabular}
    \caption{Attention scores of {\ours}, {\elimipl}, and {\demipl} for three test bags in the FMNIST-{\scriptsize{MIPL}} dataset with $r=1$. The horizontal axis denotes the indices of instances, while the vertical axis represents the corresponding attention scores. Orange and blue colors indicate attention scores assigned to positive and negative instances, respectively.}
 \label{fig:att_scores}
\end{figure}

\begin{table}[!t]
 \caption{The classification accuracies (mean$\pm$std) of {\ours} and {\oursmm} on the benchmark datasets with the varying numbers of false positive labels ($r \in \{1,2,3\}$). }
\label{tab:mamm}
  \centering  \footnotesize  
     \begin{tabular}{p{1.cm} |c|cccc}
    \hline  \hline
    \multicolumn{1}{l|}{Algorithm} 	& $r$ 	& MNIST-{\scriptsize{MIPL}}	& FMNIST-{\scriptsize{MIPL}}	& Birdsong-{\scriptsize{MIPL}}	& SIVAL-{\scriptsize{MIPL}} \\
    \hline
    \multicolumn{1}{l|}{\multirow{3}[0]{*}{{\ours}}}
	& 1     	& .985$\pm$.010			& \textbf{.915$\pm$.016} 			& \textbf{.776$\pm$.020}		& \textbf{.703$\pm$.026} \\
	& 2  		& \textbf{.979$\pm$.014} 			& \textbf{.867$\pm$.028}			& \textbf{.762$\pm$.015} 		& \textbf{.668$\pm$.031} \\
	& 3     	& .749$\pm$.103			& \textbf{.654$\pm$.055} 			& \textbf{.746$\pm$.013}		& \textbf{.627$\pm$.024} \\
	\hline
\multicolumn{1}{l|}{\multirow{3}[0]{*}{{\oursmm}}}
	& 1     	& \textbf{.987$\pm$.008}	& .894$\pm$.021 	& .762$\pm$.015	& .682$\pm$.034 \\
	& 2  		& .969$\pm$.015 	& .849$\pm$.025	& .741$\pm$.018 	& .628$\pm$.017 \\
	& 3     	& \textbf{.756$\pm$.101}	& .652$\pm$.061 	& .671$\pm$.028	& .561$\pm$.031 \\
     \hline  \hline
    \end{tabular}
\end{table}

\subsection{Comparison between Margin Loss and Margin Distribution Loss}
To adjust the margin for the predicted probabilities in the label space, we first propose the margin loss $\lambda_{ml}$ as shown in Eq. (\ref{eq:margin}). Whereas the margin loss aims to maximize the mean margin of predicted probabilities, the margin distribution loss endeavors to concurrently maximize this mean margin and minimize the variance in these probabilities. Therefore, the margin distribution loss is a generalized formulation of the margin loss. There are several significant connections between the two loss functions. (a) In scenarios where all $\phi_i$ are equal for $i \in \{1,2,\cdots,m\}$, the variance $\mathcal{V} \{ \cdot \}$ equals $0$. Consequently, the margin distribution loss reduces to the margin loss. (b) When the variability of $\phi_i$ is low, as indicated by a small variance $\mathcal{V} \{ \cdot \}$, the margin distribution loss slightly exceeds the mean of margin loss $\mathcal{M}\{\cdot\}$. This observation implies that when $\phi_i$ exhibits minimal variation across different samples, the margin distribution loss marginally surpasses the margin loss. (c) When the variability of $\phi_i$ is high, denoted by a large variance $\mathcal{V} \{ \cdot \}$, the margin distribution loss $\mathcal{M}\{\cdot\} /(1 - \mathcal{V} \{ \cdot \})$ can significantly exceed the mean of margin loss  $\mathcal{M}\{\cdot\}$. In extreme cases, if $\mathcal{V} \{ \cdot \}$ approaches $1$, then the margin distribution loss becomes notably large. This suggests that when $\phi_i$ exhibits substantial variation across different samples, the margin distribution loss could notably surpass the margin loss.

To compare the magrin loss $\mathcal{L}_{ml}$ and the margin distribution loss $\mathcal{L}_{m}$, we substitute the $\mathcal{L}_{m}$ in Eq. (\ref{eq:full_loss}) with $\mathcal{L}_{ml}$, resulting in a variant named {\oursmm}. 
Table \ref{tab:mamm} presents the classification accuracies of {\ours} and {\oursmm} on the benchmark datasets, where the weight of the margin loss $\mathcal{L}_{ml}$ is tuned to achieve preferable performances. {\ours} outperforms {\oursmm} in $10$ of the $12$ scenarios. On the MNIST-{\scriptsize{MIPL}} dataset, {\ours} performs worse than {\oursmm} when $r=1$ and $3$. We speculate that this discrepancy may be attributed to the model slightly overfitting due to the consideration of margin distribution on the relatively simple MNIST-{\scriptsize{MIPL}} dataset. Notably, the performance advantage of {\ours} is more pronounced on the more challenging Birdsong-{\scriptsize{MIPL}} and SIVAL-{\scriptsize{MIPL}} datasets. This demonstrates that optimizing the margin distribution is more effective than only optimizing the margin mean, thus confirming the superiority of the margin distribution loss.

In summary, the margin distribution loss is associated with the variability of the margin loss across different samples, effectively indicating the concentration of the margin loss within a dataset. In the MIPL datasets, the margin distribution loss outperforms the margin loss in most cases.

\subsection{Robustness of the Weight for the Margin Distribution Loss}
As shown in Eq. (\ref{eq:full_loss}), $\lambda$ denotes the weighting factor for the margin distribution loss. Figure \ref{fig:res_lambda} depicts the average accuracies from ten runs. On the Birdsong-{\scriptsize{MIPL}} dataset, experimental findings encompass $\lambda$ values ranging from $\{2, 3, \cdots, 10\}$. The obtained accuracy remains relatively stable when $\lambda$ ranges between $3$ and $7$. However, when $\lambda$ exceeds $7$, accuracy decreases. Therefore, it is noteworthy that within a certain range of $\lambda$, {\ours} can maintain a stable performance on the Birdsong-{\scriptsize{MIPL}} dataset. In the main experiment, $\lambda$ is set to $5$ for the Birdsong-{\scriptsize{MIPL}} dataset.

\begin{figure}[!h]
\setlength{\abovecaptionskip}{0.5cm} 
\setlength{\belowcaptionskip}{-0.4cm}    
\centering
\begin{overpic}[width=72mm]{./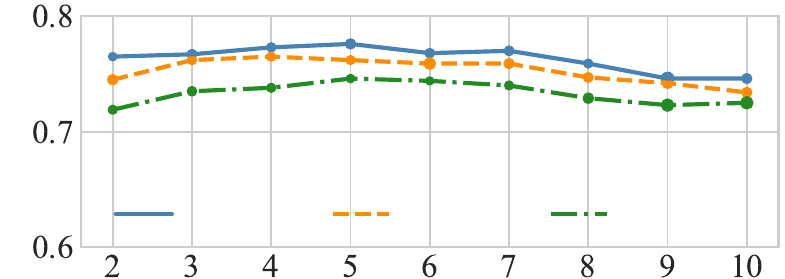}   
\put(0, 90) {\rotatebox{90}{accuracy}}  
\put(245, 70) {$r=1$} 
\put(520, 70) {$r=2$} 
\put(795, 70) {$r=3$} 
\put(539, -50) {$\lambda$} 
\end{overpic}
\caption{The classification accuracies (mean and std) of {\ours} with varying $\lambda$ on Birdsong-{\scriptsize{MIPL}} dataset ($r \in \{1,2,3\}$). The diameter of the circle represents the relative standard deviation.}   
\label{fig:res_lambda}
\end{figure}

\subsection{Robustness of the Initial Temperature}
Eq. (\ref{eq:tau}) describes the annealing process of the temperature parameter $\tau^{(t)}$, with the initial temperature parameter $\tau^{(0)}$ manually specified. To investigate the influence of different initial temperature parameters on {\ours}, we vary the initial temperature parameter across $\{4,5,6,7,8,9,10\}$, while maintaining all other parameters constant. 

Figure \ref{fig:tau} presents the average accuracies obtained from ten runs on the MNIST-{\scriptsize{MIPL}} (left) and Birdsong-{\scriptsize{MIPL}} (right) datasets. The experimental results suggest that the performance of {\ours} remains relatively stable within the range of $\{4, 5, 6, 7, 8, 9, 10\}$, particularly noticeable when $r=1$ and $r=2$ on the MNIST-{\scriptsize{MIPL}} dataset. However, {\ours} exhibits fluctuations in average accuracy when $r=3$ on the MNIST-{\scriptsize{MIPL}} dataset, and the standard deviation is significantly higher compared to when $r=1$ and $r=2$. The MNIST-{\scriptsize{MIPL}} is a five-class dataset, and $r=3$ represents an extremely challenging scenario. Therefore, we believe that the fluctuations in average accuracy observed in Figure \ref{fig:tau} (left) are unavoidable. Furthermore, as shown in Figure 1, when $\tau^{(0)}$ varies within the range of $5$ to $10$, the performance of the Birdsong-{\scriptsize{MIPL}} dataset does not exhibit significant changes.

In summary, the results presented in Figure \ref{fig:tau} highlight the robustness of \textit{{\ours}} to variations in the initial temperature parameter within the range of $\{5, 6, 7, 8, 9, 10\}$. In our experiments, the initial temperature parameter $\tau^{(0)}$ is consistently set to 5 for all datasets.

\begin{figure}[!h]
\setlength{\abovecaptionskip}{0.5cm} 
\setlength{\belowcaptionskip}{-0.3cm} 
\centering
\begin{overpic}[width=138mm]{./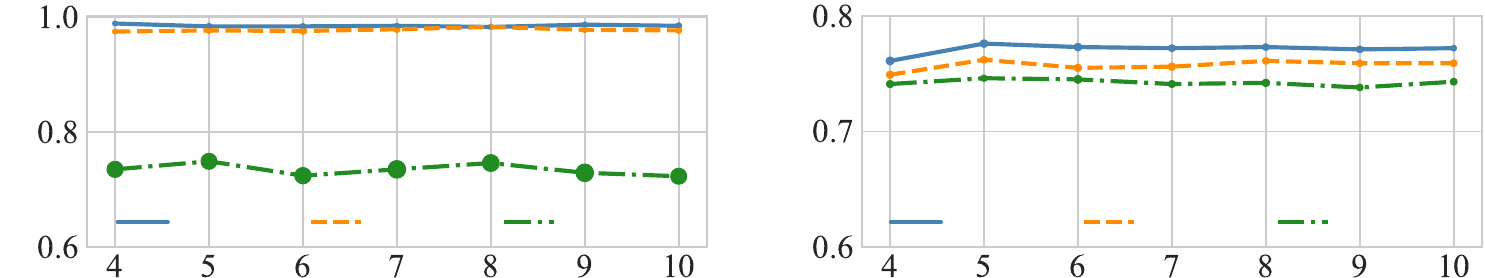}   
\put(0, 50) {\rotatebox{90}{accuracy}} 
\put(250, -30) {$\tau^{(0)}$}
\put(130, 30) {$r=1$} 
\put(260, 30) {$r=2$} 
\put(390, 30) {$r=3$} 
\put(520, 50) {\rotatebox{90}{accuracy}} 
\put(770, -30) {$\tau^{(0)}$}
\put(650, 30) {$r=1$} 
\put(780, 30) {$r=2$} 
\put(910, 30) {$r=3$} 
\end{overpic}
\caption{The classification accuracies (mean and std) of {\ours} with varying $\tau^{(0)} \in \{4,5,6,7,8,9,10\}$ on MNIST-{\scriptsize{MIPL}} (left) and Birdsong-{\scriptsize{MIPL}} (right) datasets ($r \in \{1,2,3\}$). The diameter of the circle represents the relative standard deviation.} 
\label{fig:tau}
\end{figure}

\section{Related Work}
\label{sec:related}
\subsection{Multi-Instance Learning}
Multi-instance learning has its roots in drug activity prediction \citep{dietterich1997}, and it has found applications in a variety of fields ranging from text classification \cite{ZhouSL09,Weijia21}, object detection \cite{YuanWFLXJY21}, and video anomaly detection \cite{LvYSL0Z23}.
In contemporary multi-instance learning methodologies, a prevalent strategy involves incorporating attention mechanisms to aggregate features from each multi-instance bag into a unified feature representation, subsequently fed into a classifier. \citet{IlseTW18} introduced both the plain attention mechanism and gated attention mechanism to effectively enhance the performance of binary multi-instance learning. An extension of this paradigm is the loss-based attention mechanisms \cite{Shi20}, providing a solution for multi-class tasks. Owing to the exceptional performance of the attention mechanisms, multi-instance learning methods based on attention mechanisms have gained widespread adoption in tasks such as histopathological image classification \cite{cui2023bayesmil,xiang2023exploring}. These attention mechanisms typically fall under the category of soft attention mechanisms, wherein the weighted sum of attention scores for all instances in a multi-instance bag yields a bag-level feature representation. Conversely, \citet{li2021deep} introduced hard attention mechanisms, which focus on selecting a subset of instances from multi-instance bags to construct the feature representation.

Despite the considerable performance advancements achieved by these algorithms in multi-instance learning tasks, their direct application in multi-instance partial-label learning scenarios is impeded by their inability to handle inexact label information directly.

\subsection{Partial-Label Learning}
Partial-label learning has widespread applications in diverse real-world scenarios, encompassing facial age estimation \cite{wangdb2022}, face naming \cite{cour2011learning}, object classification \cite{liu2012conditional}, and bioinformatics \cite{briggs2012rank,WangZ20}. Margin violations also exist in partial-label learning, prompting researchers to introduce the maximum margin criteria as a viable solution.
\citet{NguyenC08} employed the maximum margin criterion to augment the distinction between the model's highest predicted probability on candidate labels and its highest predicted probability on non-candidate labels. Similarly, \citet{YuZ17} focused on maximizing the margin between the model's predicted probability on the true label and its highest predicted probability on labels other than the true one. However, these methods require an alternating optimization, contributing to the intricacies of the optimization procedure.
In recent years, numerous deep learning-based partial-label learning algorithms have emerged. \citet{LvXF0GS20} utilize linear classifiers or multi-layer perceptrons to learn feature representations from instances, employing progressive disambiguation strategies to identify true labels. Following this line of thought, \citet{FengL0X0G0S20} delved into the generation process of partial-label data and proposed two theoretically guaranteed partial-label learning algorithms. Similarly, \citet{WenCHL0L21} introduced a weighted loss for disambiguation, serving as a generalized version across multiple algorithms.

While these algorithms exhibit considerable efficacy in tackling partial-label learning problems, they encounter limitations in directly handling inexact supervision within the instance space. Consequently, they cannot be directly applied to multi-instance partial-label learning problems.

\subsection{Multi-Instance Partial-Label Learning}
MIPL is an extension that encompasses both MIL and PLL. Its objective is to tackle the challenge of inexact supervision simultaneously present in both instance and label spaces. To our knowledge, only three viable MIPL algorithms, \ie {\miplgp} \cite{tang2023miplgp}, {\demipl} \cite{tang2023demipl}, and {\elimipl} \cite{tang2024elimipl}, currently exist. \citet{tang2023miplgp} have introduced the MIPL framework, with {\miplgp} adopting an instance-space paradigm. {\miplgp} is structured in three steps. Firstly, it augments a negative class for each candidate label set. Secondly, it treats the candidate label set of each multi-instance bag as that of each instance within the bag. Finally, it employs the Dirichlet disambiguation strategy and the Gaussian processes regression model for disambiguation. On the other hand, {\demipl} follows the embedded-space paradigm and consists of two steps. Initially, it aggregates each multi-instance bag into a unified feature representation through the disambiguated attention mechanism. Subsequently, it employs a momentum-based disambiguation strategy to discern true labels from candidate label sets. Following this way, \citet{tang2024elimipl} have proposed {\elimipl} to exploit the information from candidate and non-candidate label sets via three loss functions. Specifically, {\elimipl} learns the mappings from the multi-instance bags to the candidate label sets and the sparsity of the candidate label matrix. Moreover, it incorporates the non-candidate label information via an inhibition loss. Recently, \citet{yang2024promipl} have proposed a probabilistic generative model for multi-instance partial-label learning (MIPL) that infers latent ground-truth labels by modeling the data generation process of MIPL. From a theoretical perspective, \citet{wang2024learning} have established connections between MIPL and latent structural learning, as well as neurosymbolic integration.

However, the existing MIPL algorithms do not consider the margins of attention score and predicted probabilities, and thus suffer from the issues of margin violations illustrated in Figure \ref{fig:ill}.

\section{Data and Code Availability}
The implementations of the compared algorithms are publicly available. Table \ref{tab:alg_ava} includes the URLs of all compared algorithms in this paper, while the source code of our proposed {\ours} is included in the supplementary material. The MIPL datasets can be accessed publicly at \url{http://palm.seu.edu.cn/zhangml/}. 

\begin{table}[!t]
\caption{Code availability of the algorithms.}
\label{tab:alg_ava}
\centering  
\footnotesize  
\begin{tabular}{p{1.9cm}  c }
\hline  \hline
Algorithm  	&  URL	\\
\hline
{\ours}  	& 	\url{https://github.com/tangw-seu/MIPLMA} \\ 
{\elimipl}  	& 	\url{https://github.com/tangw-seu/ELIMIPL} \\ 
{\demipl}  	&	\url{https://github.com/tangw-seu/DEMIPL} \\
{\miplgp}  	&	\url{https://github.com/tangw-seu/MIPLGP} \\
\hline
{\vwsgp}  	& 	\url{https://github.com/melihkandemir/vwsgp}	\\
{\vgpmil}  	& 	\url{https://github.com/manuelhaussmann/vgpmil} \\
{\lmvgpmil}& 	\url{https://github.com/manuelhaussmann/vgpmil} \\
{\mivae}  	& 	\url{https://github.com/WeijiaZhang24/MIVAE} \\
{\att}  	& 	\url{https://github.com/AMLab-Amsterdam/AttentionDeepMIL} \\
{\attgate}  	& 	\url{https://github.com/AMLab-Amsterdam/AttentionDeepMIL} \\
{\lossatt}  	& 	\url{https://github.com/xsshi2015/Loss-Attention} \\
\hline
{\proden}  	& 	\url{https://github.com/Lvcrezia77/PRODEN} \\
{\rc}  		& 	\url{https://lfeng-ntu.github.io/codedata.html}\\
{\lws}  	& 	\url{https://github.com/hongwei-wen/LW-loss-for-partial-label} \\
{\cavl}  	& 	\url{https://github.com/Ferenas/CAVL} \\
{\pop}	& 	\url{https://github.com/palm-ml/POP} \\
{\aggd}  	& 	\url{http://palm.seu.edu.cn/zhangml/} \\
\hline  \hline
\end{tabular}
\vspace{-0.3cm}
\end{table}

\section{Broader Impact}
\label{sec:impact}
The proposed {\ours} has several potential societal impacts, both positive and negative.

The positive impacts of {\ours} include: (a) Medical diagnosis: {\ours} could enhance the accuracy of medical diagnoses where obtaining exact labels is challenging. For example, in medical image classification tasks, when experts lack confidence in the provided results, {\ours} can yield more accurate outcomes.
(b) Privacy-preserving surveys: Similar to PLL methods, {\ours} can be applied in scenarios where respondents are hesitant to disclose sensitive information. By allowing respondents to select a set of candidate labels instead of providing a single label, {\ours} can facilitate data collection while respecting privacy. This can be particularly useful in fields such as mental health, where patients may feel uncomfortable disclosing exact symptoms or conditions.

Conversely, there may also exist some negative impacts of {\ours}. (a) Misuse in surveillance: There is a risk that our method could be exploited in surveillance systems to infer sensitive information about individuals without their explicit consent. For example, an adversary could use our algorithm to analyze multi-instance data collected from social media or other sources to infer private details about individuals, leading to potential breaches of privacy. (b) Job displacement: As {\ours} improves the efficiency and effectiveness of learning from inexact data, it might reduce the demand for human annotators. This could lead to job displacement involved in data labeling and annotation. Consequently, efforts should be made to retrain and upskill these workers to mitigate the impact.

\end{document}